\useunder{\uline}{\ul}{}
\definecolor{mydarkorange}{HTML}{B86046}
\definecolor{codegreen}{rgb}{0,0.6,0}
\definecolor{codegray}{rgb}{0.5,0.5,0.5}
\definecolor{codepurple}{rgb}{0.58,0,0.82}
\newcommand{\zpw}{\mathcal{F}(S_{\text{PhaseWin}})}
\newcommand{\zopt}{\mathcal{F}(S_{\text{OPT}})}
\newcommand{\na}{\textemdash}
\definecolor{oursrow}{RGB}{235,235,252}
\newcommand{\best}[1]{\bfseries #1}
\newcommand{\mc}[1]{\multicolumn{1}{c}{#1}} 
\newtcbox{\rboxednum}{}
\theoremstyle{plain}
\newtheorem{theorem}{Theorem}
\newtheorem{proposition}{Proposition}
\theoremstyle{definition}
\newtheorem{definition}{Definition}
\theoremstyle{remark}
\newtheorem{remark}{Remark}
\newenvironment{qbox}
{\begin{tcolorbox}[colback=white, width=\linewidth, center, left=2pt,right=2pt,top=4pt,bottom=4pt,boxrule=1.2pt,arc=4pt]}
{\end{tcolorbox}}
\definecolor{cvprblue}{rgb}{0.21,0.49,0.74}
\title{PhaseWin Search Framework Enable Efficient Object-Level Interpretation}
\author{
Zihan Gu$^{1,2}$, Ruoyu Chen$^{1,2,\spadesuit}$, Junchi Zhang$^{3}$, Yue Hu$^{1,*}$, Hua Zhang$^{1,*}$,  Xiaochun Cao$^{4}$\\
\small$^{1}$Institute of Information Engineering, Chinese Academy of Sciences\\
\small$^{2}$School of Cyber Security, University of Chinese Academy of Sciences~~~~$^{3}$School of Mathematical Sciences, Fudan University\\
\small$^4$School of Cyber Science and Technology, Sun Yat-sen University\\
\small ~~~~~~~~~~~~~~~~~~~~~$\spadesuit$ Project Leader~~~~~~~~~~~~~~~~~~~~$*$ Corresponding Authors\\
\small\texttt{\{guzihan,chenruoyu,huyue,zhanghua\}@iie.ac.cn}~~~~~~~~\texttt{caoxiaochun@mail.sysu.edu.cn}
}
\begin{document}
\maketitle
\begin{abstract}
Attribution is essential for interpreting object-level foundation models. Recent methods based on submodular subset selection have achieved high faithfulness, but their efficiency limitations hinder practical deployment in real-world scenarios. To address this, we propose PhaseWin, a novel phase-window search algorithm that enables faithful region attribution with near-linear complexity. PhaseWin replaces traditional quadratic-cost greedy selection with a phased coarse-to-fine search, combining adaptive pruning, windowed fine-grained selection, and dynamic supervision mechanisms to closely approximate greedy behavior while dramatically reducing model evaluations. Theoretically, PhaseWin retains near-greedy approximation guarantees under mild monotone submodular assumptions. Empirically, PhaseWin achieves over 95\% of greedy attribution faithfulness using only 20\% of the computational budget, and consistently outperforms other attribution baselines across object detection and visual grounding tasks with Grounding DINO and Florence-2. PhaseWin establishes a new state of the art in scalable, high-faithfulness attribution for object-level multimodal models.
\end{abstract}
\section{Introduction}
\label{introduction}

\begin{figure}
    \centering
    \includegraphics[width=0.48\textwidth]{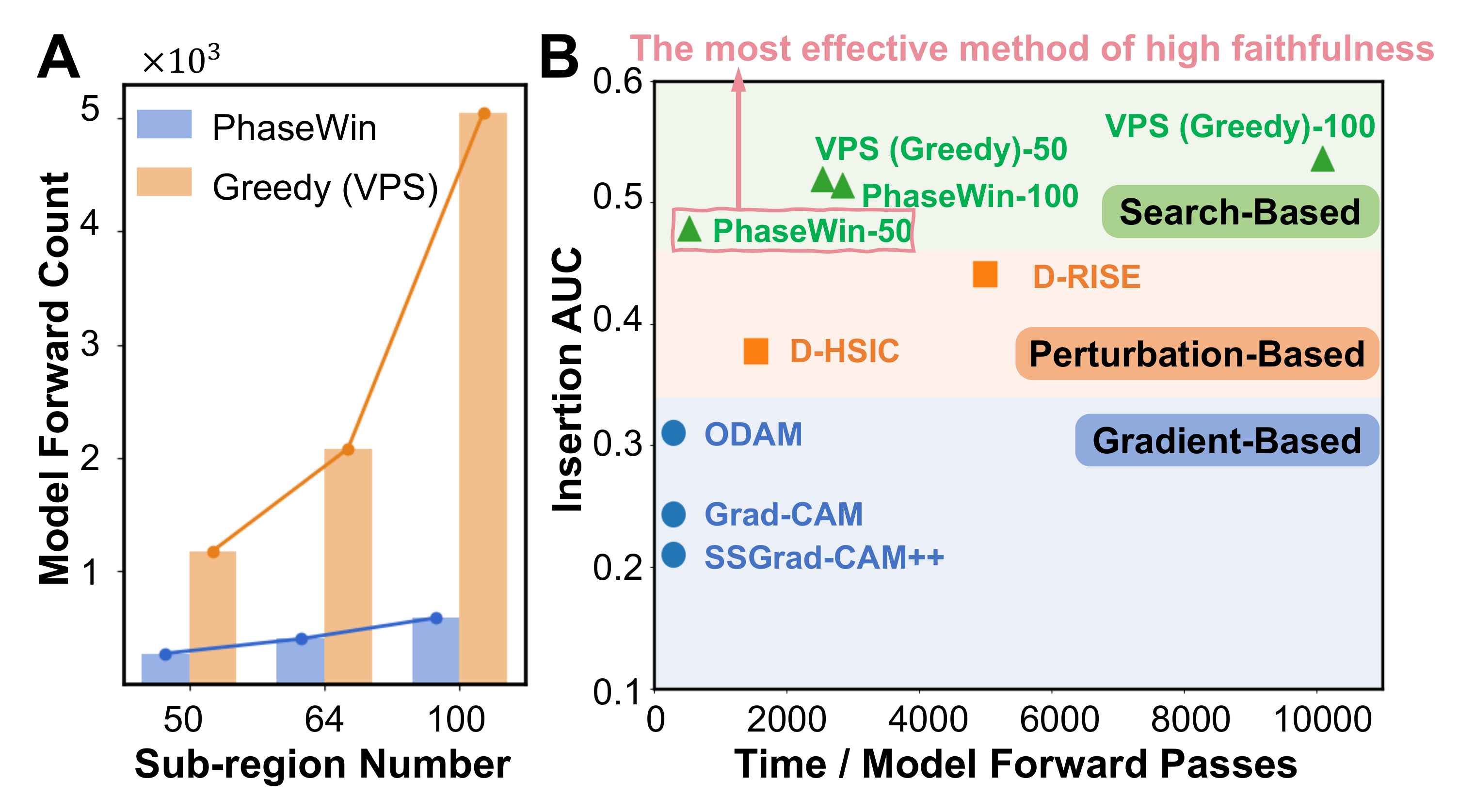}\vspace{-8pt}
    \caption{\textbf{A.} Comparison of model forward counts between VPS and PhaseWin (window Size fixed as $16$) across different subregion numbers. \textbf{B.} Comparison of Insertion AUC and computational cost among representative methods, where PhaseWin achieves near-VPS faithfulness with a fraction of the computational budget.} 
    \label{fig:intro}
\end{figure}

Understanding the behavior of large-scale foundation models~\citep{dwivedi2023explainable,gao2024going} is essential for reliable visual reasoning. Attribution methods~\citep{montavon2017explaining,yamauchi2024explaining} reveal which regions drive a model’s decision, supporting tasks such as debugging, failure diagnosis, bias detection, and safety auditing~\citep{miller2019evaluating,feng2021review,wilson2023safe,stocco2022thirdeye,shu2024informer,chen2025less}. In safety-critical applications, including autonomous driving and open-world perception~\citep{kuznietsov2024explainable,liu2023x}, faithful object-level attribution is key to building transparent and trustworthy AI systems.







Two families of attribution methods dominate current practice. Gradient-based approaches~\citep{zhao2024gradient_detector,yamauchi2024spatial} are efficient, but gradients in multimodal architectures often suffer from weak localization and cross-modal interference~\citep{selvaraju2020grad,jiang2024comparing}. Perturbation-based methods~\citep{petsiuk2018rise,petsiuk2021black} achieve stronger faithfulness by explicitly probing the model with masked inputs, yet their computational cost is prohibitively high~\citep{novello2022making,jiang2023diverse}. This makes them difficult to deploy in large-scale or real-time systems.

A recent breakthrough is the use of \emph{submodular modeling}. \textsc{LiMA}~\citep{chen2024less,chen2025less} in classification and VPS~\citep{chen2025vps} in object-level interpretation show that modeling attribution as maximizing a nearly submodular objective greatly improves faithfulness. VPS, in particular, delivers state-of-the-art attribution quality by applying a greedy region-selection process grounded in submodular optimization principles~\citep{edmonds1970submodular,fujishige2005submodular}. However, the greedy algorithm at its core evaluates every remaining region at each step. This results a \emph{quadratic} number of forward passes, which becomes the dominant bottleneck for scaling submodular-based attribution.

Figure~\ref{fig:intro}A illustrates this problem: the cost of greedy selection grows sharply with the number of subregions. Figure~\ref{fig:intro}B shows that although greedy-based attribution offers the highest faithfulness among representative paradigms, it remains substantially more expensive than gradient-based and mask-search-based methods. These trends raise a natural question:

\noindent\begin{qbox}
\begin{center}
\footnotesize\textbf{\textit{Can we retain the faithfulness of submodular greedy attribution while breaking its quadratic computational barrier?}}
\end{center}
\end{qbox}

To address this, we propose \textbf{Phase-Window (PhaseWin)} algorithm. PhaseWin rethinks greedy selection from a structural perspective. Instead of exhaustively scoring all candidates at every iteration, PhaseWin organizes the search into phases. Each phase begins by selecting an anchor region and using it to set adaptive thresholds, pruning the majority of low-potential candidates. A windowed fine-grained search then explores only a compact set of promising regions, guided by dynamic supervision that terminates unproductive phases and an annealed deferral mechanism that helps avoid poor local choices. This design preserves the behavior what makes greedy effective while reducing the number of true evaluations by an order of magnitude.

PhaseWin provides theoretical guarantees under standard monotone submodular conditions. More importantly, it delivers practical benefits for visual attribution. Across object detection and visual grounding tasks with Grounding DINO~\citep{liu2023grounding} and Florence-2~\citep{xiao2024florence}, and datasets including MS COCO, LVIS, and RefCOCO, PhaseWin achieves over 95\% of greedy attribution quality while using only about 20\% of its computational budget. As shown in Figure~\ref{fig:intro}, PhaseWin shifts the efficiency–faithfulness frontier, making high-fidelity submodular attribution computationally viable for real-world use.

Our contributions are summarized as follows:
\begin{itemize}[leftmargin=*, itemsep=0pt, topsep=0pt]
\item A structural improvement to greedy attribution. We show that the quadratic bottleneck of greedy region selection is not inherent. By reorganizing the scoring process into phased pruning and windowed local exploration, PhaseWin preserves greedy-like behavior while enabling near-linear scalability.
\item We introduce PhaseWin, a search procedure with adaptive thresholds, dynamic supervision, and annealed deferral. This algorithm provides theoretical near-greedy guarantees and significantly enhances the practicality of submodular modeling for object-level attribution.
\item Extensive empirical validation across multimodal foundation models demonstrates that PhaseWin consistently preserves over 95\% of greedy faithfulness while reducing computational cost to approximately 20\%. The method generalizes across diverse datasets (MS COCO, RefCOCO, LVIS) and architectures (Grounding DINO, Florence-2), establishing a new state of balance between efficiency and faithfulness.
\end{itemize}

\section{Related Work}
\textbf{Explaining Object Detectors.}
Explaining detector decisions is challenged by their intertwined localization and classification signals. Approaches range from adapting gradient-based attribution \citep{gudovskiy2018explain,selvaraju2020grad,zhao2024gradient_detector} and randomized perturbations \citep{petsiuk2018rise,petsiuk2021black} to refining Grad-CAM for spatial sensitivity \citep{yamauchi2022spatial,yamauchi2024spatial,chattopadhay2018grad}. While some methods explore diverse rationales at high computational cost \citep{jiang2023diverse}, recent state-of-the-art work uses causal search to generate high-fidelity explanations~\citep{chen2025vps}. Other studies compare architectures \citep{jiang2024comparing}, decompose representations \citep{gandelsman2024interpreting}, or analyze pixel collectives \citep{yamauchi2024explaining}, with broader XAI surveys providing context \citep{dwivedi2023explainable,gao2024going}.

\textbf{Submodular Function Maximization Algorithms.}
Submodular function optimization is an evolving theory widely applied to machine learning methods, such as network inference~\citep{rodriguez2012submodular}, object detection~\citep{song2014learning}, information collection~\citep{tschiatschek2014learning}, and diverse feature selection.~\citep{das2012selecting}. 
Our research also draws heavily on work that improves submodular function optimization or submodular functions, for which a variety of algorithms have been developed~\citep{edmonds1970submodular,horel2016maximization,hassani2017gradient,fujishige2005submodular,balkanski2019optimal}.
Therefore, the work on submodular function optimization often focuses on how to improve greedy algorithms, such as lazy greedy~\citep{minoux1978lazy}. This type of work often focuses on the search algorithm itself and does not fully utilize submodularity~\citep{leskovec2007lazy}.
Then, some methods of improving greedy algorithms by using submodularity appeared~\citep{jegelka2011fast,buchbinder2014submodular,wei2014fast,breuer2020fast,mirzasoleiman2015lazier}.
Since optimizing submodular functions doesn't necessarily mean optimizing AUC, this work can't be directly applied to attribution~\citep{jegelka2011fast,buchbinder2014submodular}. However, we considered how to exploit submodular properties~\citep{wei2014fast,breuer2020fast} and comprehensively designed our PhaseWin search algorithm, achieving a breakthrough in speed.
\section{Method}
\label{method}
\begin{figure*}[t!]
    \centering
    \includegraphics[width=\textwidth]{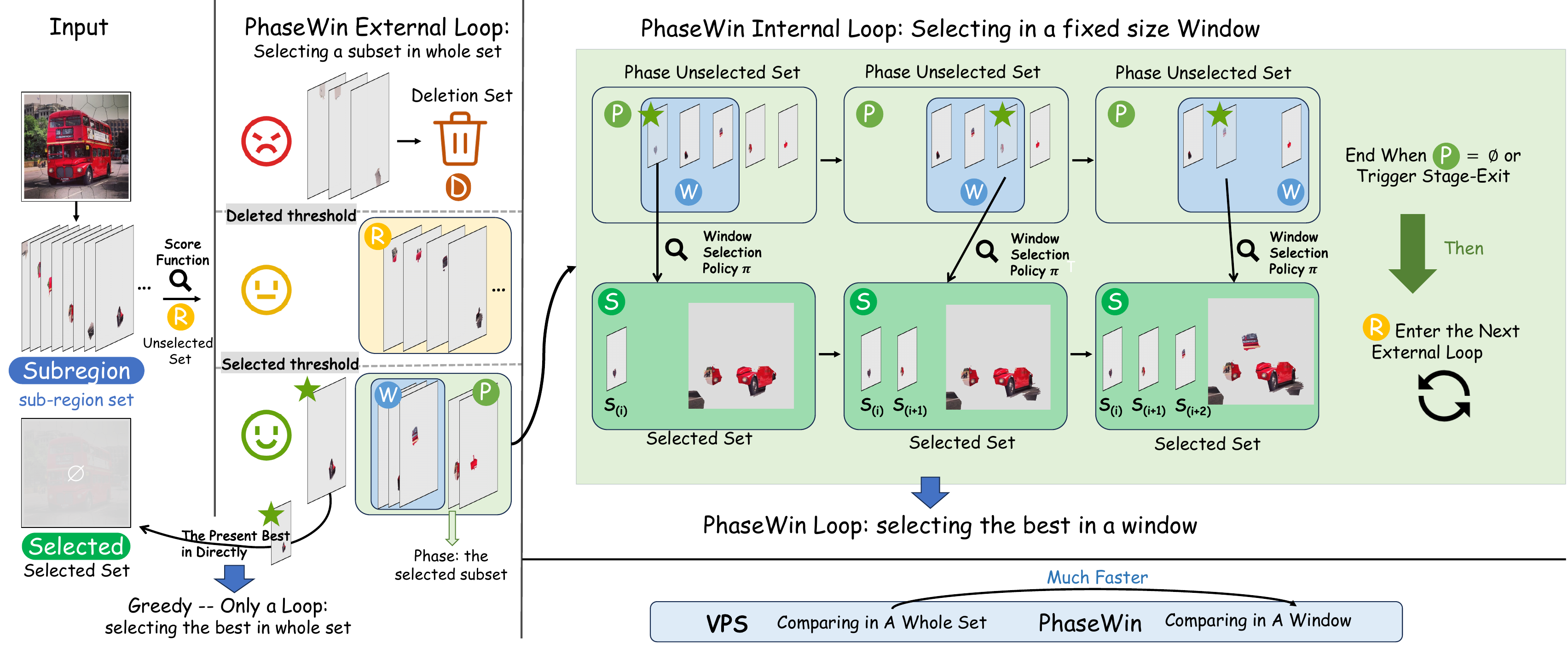} 
    \caption{\textbf{PhaseWin Workflow.} The algorithm alternates between (i) selecting an \textit{anchor} region, (ii) pruning uninformative regions using fixed-ratio thresholds, and (iii) applying a windowed fine-grained selection with dynamic supervision.}
    \label{fig:phasewin_workflow}
\end{figure*}

\subsection{Revisiting Attribution Target Problem}
\label{prob-fm}
The essence of the visual attribution task, as formalized in RISE~\citep{petsiuk2018rise}, is to progressively reveal regions of the input and measure the corresponding change in the model’s output. 
The objective is to maximize the area under the response curve of the model itself which quantifies how faithfully the model confidence rises as informative pixels are revealed. 
In object detection, this process extends naturally: given an input image $\mathbf{I}\in\mathbb{R}^{h\times w\times3}$ and a detection model $f(\cdot)$ producing bounding boxes, classes, and scores $f(\mathbf{I})=\{(b_i,c_i,s_i)\}_{i=1}^N$, the goal is to find the smallest ordered subset of image regions whose incremental insertion most efficiently recovers the model’s confidence on a target $(b_t,c_t,s_t)$.

If partitioning $\mathbf{I}$ into $m$ disjoint sub-regions
$
\mathcal{V} = \{\mathbf{I}^{s}_1, \dots, \mathbf{I}^{s}_m\},
$
and define an \emph{ordered} subset $\mathcal{S} = (s_1,\dots,s_k)$, where $s_i \in \mathcal{V}$.
The objective could be given as maximizing the area enclosed by $\mathcal{F}$ along the insertion trajectory of the ordered subset $\mathcal{S}$, such that the incremental inclusion of each element contributes to a steep and consistent rise in interpretability. The optimal ordered subset $\mathcal{S}^*$ as:
\begin{equation}\label{phasewin:object}
    \mathcal{S}^* =
    \arg\max_{\substack{\mathcal{S} = (s_1,\dots,s_k) \\ \mathcal{S} \subseteq \mathcal{V}}}
    \sum_{j=1}^{k} \frac{|s_j|}{A} \, f(\mathcal{S}_{:j}),
\end{equation}
where, $|s_j|$ denotes the pixel area of region $s_j$, $A$ is the total image area, and $\mathcal{S}_{:j}$ represents the first $j$ elements in the ordered subset $\mathcal{S}$. This formulation explicitly casts the task as an \emph{ordered subset optimization}.

To solve Eq.~\ref{phasewin:object}, prior work such as Visual Precision Search (VPS)~\citep{chen2025vps} 
formulates attribution as maximizing a near-submodular scoring function $\mathcal{F}$. 
This function integrates two complementary terms: 
a clue score $s_{\mathrm{clue}}$, which measures how well a region set supports the target detection, 
and a collaboration score $s_{\mathrm{colla}}$, which quantifies the synergistic importance of the selected regions 
by evaluating the degradation when they are removed. 
The combined objective is:
\[
\mathcal{F}(S,\mathbf{b}_{\mathrm{target}},c)
=
s_{\mathrm{clue}}(S,\mathbf{b}_{\mathrm{target}},c)
+
s_{\mathrm{colla.}}(S,\mathbf{b}_{\mathrm{target}},c),
\]
and is treated as approximately monotone submodular in practice. 
VPS adopts a standard greedy search to optimize this objective, 
yielding strong attribution faithfulness but at the cost of evaluating every remaining candidate at each step.

Although adjusting the curvature or regularization of $\mathcal{F}$ can further improve attribution faithfulness, the main limitation of this line of methods lies at the \emph{algorithmic} level: the greedy optimization procedure inherently incurs a quadratic number of model evaluations. 
For a fair comparison and to isolate the contribution of improving the \textit{search mechanism} itself, 
we retain the original VPS scoring function and focus on developing a more scalable greedy-style optimization algorithm---our proposed PhaseWin.

\subsection{Phase-Window Accelerated Search}
\label{sec:phasewin_alg}

For maximizing the ordered insertion-AUC objective, a naive greedy search that evaluates all remaining candidates at each step is theoretically optimal, but its $\mathcal{O}(m^2)$ scoring cost is prohibitive in practice. We propose the \textbf{Phase-Window (PhaseWin) Search}, an efficient greedy-style approximation that closely tracks greedy performance while reducing the number of expensive scoring function calls by an order of magnitude.

PhaseWin's acceleration stems from a phased, coarse-to-fine search strategy, illustrated in Figure~\ref{fig:phasewin_workflow}. The algorithm operates in phases. Each phase begins with a full evaluation to find a high-confidence \textit{anchor} region. The marginal gain of this anchor is then used as a reference to define two fixed-ratio thresholds: a selection threshold $\tau_{\mathrm{sel}} = \rho_{\mathrm{sel}} \cdot \Delta_{\mathrm{ref}}$ and a deletion threshold $\tau_{\mathrm{del}} = \rho_{\mathrm{del}} \cdot \Delta_{\mathrm{ref}}$, where $0 < \rho_{\mathrm{del}} < \rho_{\mathrm{sel}} < 1$ are constants and $\Delta_{\mathrm{ref}}$ is the anchor gain. Candidates with gains above $\tau_{\mathrm{sel}}$ enter a high-potential pool, while those below $\tau_{\mathrm{del}}$ are discarded. The remaining ambiguous candidates are deferred to future phases. This high-level process is detailed in Algorithm~\ref{alg:phasewin}.

\begin{algorithm}[h]
\caption{PhaseWin: Phase-Window Accelerated Search}
\label{alg:phasewin}

\renewcommand{\gets}{\leftarrow}

\KwIn{Candidate set $\mathcal{V}$, target size $k$, scoring function $\mathcal{F}(\cdot)$}
\KwOut{Ordered subset $S$}

$S \gets \emptyset$; \quad $\mathcal{R} \gets \mathcal{V}$; \quad $\Delta_{\mathrm{ref}} \gets \infty$\;

\While{$|S| < k$ \textbf{and} $\mathcal{R} \neq \emptyset$}{
    \tcp{Phase anchor selection}
    $g_r \gets \mathcal{F}(S \cup \{r\}) - \mathcal{F}(S)$ for all $r \in \mathcal{R}$\;
    $\alpha^\star \gets \arg\max_{r \in \mathcal{R}} g_r$\;
    $S \gets S \cup \{\alpha^\star\}$\;
    $\Delta_{\mathrm{ref}} \gets g_{\alpha^\star}$\;
    $\mathcal{R} \gets \mathcal{R} \setminus \{\alpha^\star\}$\;
    
    \tcp{Fixed-ratio thresholds}
    $\tau_{\mathrm{sel}} \gets \rho_{\mathrm{sel}} \cdot \Delta_{\mathrm{ref}}$\;
    $\tau_{\mathrm{del}} \gets \rho_{\mathrm{del}} \cdot \Delta_{\mathrm{ref}}$\;
    
    \tcp{Build high-potential pool}
    $\mathcal{P} \gets \emptyset$; \quad $\mathcal{R}_{\text{next}} \gets \emptyset$\;
    \For{$r \in \mathcal{R}$}{
        \lIf{$g_r \ge \tau_{\mathrm{sel}}$}{$\mathcal{P} \gets \mathcal{P} \cup \{r\}$}
        \lElseIf{$g_r \le \tau_{\mathrm{del}}$}{discard $r$}
        \lElse{$\mathcal{R}_{\text{next}} \gets \mathcal{R}_{\text{next}} \cup \{r\}$}
    }
    $\mathcal{R} \gets \mathcal{R}_{\text{next}}$\;
    
    \tcp{Windowed fine-grained search}
    $S_{\mathrm{phase}} \gets \texttt{WindowSelection}(\mathcal{P}, S, k, \mathcal{F}, \Delta_{\mathrm{ref}})$\;
    $S \gets S \cup S_{\mathrm{phase}}$\;
}

\KwRet{$S$}\;

\end{algorithm}

The core of our method lies in the \texttt{WindowSelection} subroutine, which performs a fine-grained search on the pruned high-potential pool $\mathcal{P}$. We first sort $\mathcal{P}$ by the cached gains $g_r$ and initialize a sliding window $W$ with the top-ranked candidates, while the remaining ones are stored in a queue $Q$. A window policy $\pi(\cdot)$ is then applied to select a subset $A \subseteq W$ for true evaluation. In practice, we consider two simple policies: (1) $\pi_{\mathrm{LG}}$, which picks only the top candidate in $W$, and (2) $\pi_{\mathrm{BA}}$, which selects all candidates whose cached gains are above a fixed-ratio cut-off based on the maximum gain in $W$.

For each candidate $\alpha \in A$, we compute its true marginal gain $\Delta_\alpha$ and evaluate it with two control mechanisms. First, a \textbf{stage-exit} rule compares $\Delta_\alpha$ to a reference $\Delta_{\mathrm{ref}}$ and terminates the phase early if $\Delta_\alpha < \theta \cdot \Delta_{\mathrm{ref}}$, avoiding unnecessary computation when returns become negligible. Otherwise, the candidate is further processed by an \textbf{annealing delay} mechanism, which decides whether to accept it immediately or defer its inclusion to encourage exploration. Accepted candidates are appended to the current solution $S$, and their gains update $\Delta_{\mathrm{ref}}$. After each update, the window $W$ is replenished from the queue $Q$ until either the target size $k$ is reached or no promising candidates remain.

\subsection{Theory Analysis}

Greedy search is both a curse and a constraint in the development of submodular function maximization algorithms:
it has long been proven to be the optimal and fastest method to achieve the best possible approximation under polynomial-time constraints.
We first restate the classic result as follows.

\begin{proposition}\label{pro}
    For maximizing a monotone submodular objective $\mathcal{F}:2^\mathcal{V}\to\mathbb{R}_+$ under a cardinality constraint $k$,
    let $S_{\mathrm{greedy}}$ denote the solution returned by the standard greedy algorithm
    and $S_{\mathrm{OPT}}$ denote the optimal subset of size $k$.
    Then the greedy algorithm achieves the optimal approximation ratio:
    \[
        \mathcal{F}(S_{\mathrm{greedy}})
        \;\geq\;
        \Bigl(1-\frac{1}{e}\Bigr)\,\mathcal{F}(S_{\mathrm{OPT}}),
    \]
    and no polynomial-time algorithm can surpass this bound unless $P=NP$~\citep{nemhauser1978analysis,fujishige2005submodular}.
\end{proposition}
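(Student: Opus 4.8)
The plan is to prove the approximation bound constructively by tracking how the value of the greedy solution approaches $\mathcal{F}(S_{\mathrm{OPT}})$ across iterations, and to handle the optimality (hardness) clause separately by invoking known matching lower bounds rather than reconstructing them. First I would fix notation for the greedy iterates: let $S_0 = \emptyset$ and let $S_i$ be the greedy set after $i$ selections, so that $S_k = S_{\mathrm{greedy}}$ and each $S_{i+1} = S_i \cup \{\alpha_i\}$, where $\alpha_i$ maximizes the marginal gain $\mathcal{F}(S_i \cup \{\alpha\}) - \mathcal{F}(S_i)$ over the remaining candidates. The goal is to show that one greedy pick closes a $1/k$ fraction of the remaining gap to the optimum, then unroll this into a geometric decay.

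The central step, and the one I expect to demand the most care, is this per-iteration inequality. To establish it I would start from monotonicity, $\mathcal{F}(S_{\mathrm{OPT}}) \le \mathcal{F}(S_{\mathrm{OPT}} \cup S_i)$, and telescope the right-hand side by inserting the at most $k$ elements of $S_{\mathrm{OPT}} \setminus S_i$ one at a time on top of $S_i$. Submodularity then lets me upper-bound each of these marginal gains by the gain of adding that same element directly to $S_i$, and the greedy selection rule guarantees every such quantity is at most $\mathcal{F}(S_{i+1}) - \mathcal{F}(S_i)$. Summing over the at most $k$ inserted elements yields $\mathcal{F}(S_{\mathrm{OPT}}) - \mathcal{F}(S_i) \le k\,\bigl[\mathcal{F}(S_{i+1}) - \mathcal{F}(S_i)\bigr]$.

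Writing $\delta_i = \mathcal{F}(S_{\mathrm{OPT}}) - \mathcal{F}(S_i)$, this rearranges to the contraction $\delta_{i+1} \le \bigl(1 - \tfrac{1}{k}\bigr)\delta_i$, which I would unroll by induction to obtain $\delta_k \le \bigl(1 - \tfrac{1}{k}\bigr)^{k}\delta_0 = \bigl(1 - \tfrac{1}{k}\bigr)^{k}\mathcal{F}(S_{\mathrm{OPT}})$, using $\delta_0 = \mathcal{F}(S_{\mathrm{OPT}}) - \mathcal{F}(\emptyset) \le \mathcal{F}(S_{\mathrm{OPT}})$ since nonnegativity is built into the codomain $\mathbb{R}_+$. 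The elementary estimate $\bigl(1 - \tfrac{1}{k}\bigr)^{k} \le e^{-1}$ then gives $\mathcal{F}(S_{\mathrm{greedy}}) \ge \bigl(1 - \tfrac{1}{e}\bigr)\mathcal{F}(S_{\mathrm{OPT}})$, which is the claimed bound.

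For the optimality clause I would not attempt a fresh construction but cite the two standard matching lower bounds: the value-oracle impossibility of Nemhauser and Wolsey, showing no algorithm issuing polynomially many oracle queries can beat $1 - 1/e$, together with Feige's NP-hardness of approximating Max-$k$-Cover beyond $1 - 1/e$, which certifies that the gap is not merely an artifact of oracle access. The main obstacle throughout is getting the telescoping argument exactly right, in particular ensuring the marginal-gain comparison applies submodularity on the correct nested sets and correctly accounts for any elements of $S_{\mathrm{OPT}}$ already contained in $S_i$ (for which the corresponding marginal contribution is simply zero and can be dropped).
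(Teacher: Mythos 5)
Your proof is correct, but note that the paper itself never proves Proposition 3.1: Appendix~\ref{appendix:proof} explicitly states that it is a classic result and defers entirely to the cited literature (Edmonds 1970, Fujishige 2005, Nemhauser et al.\ 1978). What you have reconstructed is precisely the canonical Nemhauser--Wolsey--Fisher argument that those references contain: monotonicity plus telescoping over the at most $k$ elements of $S_{\mathrm{OPT}} \setminus S_i$, submodularity to pull each marginal back to the base set $S_i$, the greedy rule to dominate all such gains by $\mathcal{F}(S_{i+1}) - \mathcal{F}(S_i)$, and the contraction $\delta_{i+1} \le (1 - \tfrac{1}{k})\delta_i$ unrolled against $(1-\tfrac{1}{k})^k \le e^{-1}$. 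Your handling of the two edge points is right: elements of $S_{\mathrm{OPT}} \cap S_i$ contribute zero to the telescope and can be dropped, and nonnegativity of $\mathcal{F}$ suffices even when $\mathcal{F}(\emptyset) > 0$, since $\delta_k \le e^{-1}\delta_0 \le e^{-1}\mathcal{F}(S_{\mathrm{OPT}})$ still yields the claimed bound. On the hardness clause you are in fact slightly more careful than the proposition's own phrasing: the Nemhauser--Wolsey value-oracle lower bound is unconditional and does not involve $P$ versus $NP$ at all, whereas the ``unless $P = NP$'' claim rests on Feige's inapproximability of Max-$k$-Cover; the paper's statement blurs these by attaching the $P=NP$ caveat to citations that only cover the oracle-model and greedy-analysis results. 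Distinguishing the two, as you do, is the correct way to certify both halves of the claim.
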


Therefore, greedy selection serves as the \emph{de facto} gold standard,
and our analysis focuses on matching its empirical behavior while achieving substantial acceleration.
Our phase-window accelerated search (PhaseWin) is analogous to quicksort for sorting:
it is extremely fast in typical cases, yet it still offers explicit approximation guarantees with the phase-supervised early exit mechanism enabled.

\begin{theorem}[Approximation Guarantee]\label{thm}
    Let $S_{\mathrm{PhaseWin}}$ denote the solution returned by PhaseWin,
    and let $\theta\in[0,1)$ be an upper bound on the fraction of phases where early exits occur due to dynamic supervision.
    If the objective $\mathcal{F}$ is monotone submodular, then
    \[
        \mathcal{F}(S_{\mathrm{PhaseWin}})
        \;\geq\;
        \Bigl(1 - \frac{1}{e} - \mathbf{o}(1) \Bigr) \,\mathcal{F}(S_{\mathrm{OPT}}).
    \]
\end{theorem}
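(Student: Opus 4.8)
The plan is to reduce the PhaseWin guarantee to the classical greedy analysis of Proposition~\ref{pro}, treating the windowed search and the pruning steps as a controlled perturbation of the exact greedy step. The central object to track is the per-step multiplicative deficit: in the standard proof one shows that each greedy step $i$ closes a $(1-1/k)$ fraction of the remaining gap, i.e. $\mathcal{F}(\mathcal{S}_{OPT}) - \mathcal{F}(S_i) \le \bigl(1-\tfrac{1}{k}\bigr)\bigl(\mathcal{F}(\mathcal{S}_{OPT}) - \mathcal{F}(S_{i-1})\bigr)$, and then iterates $k$ times to obtain the $(1-1/e)$ factor. I would first establish that on \emph{non-early-exit} phases, PhaseWin's selected elements achieve a marginal gain at least a factor $(1-\varepsilon)$ of the true greedy maximum, where $\varepsilon \to 0$ as the window size and the selection ratio $\rho_{\mathrm{sel}}$ tighten. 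The key sublemma is that the anchor, being chosen by a \emph{full} evaluation of $\mathcal{R}$, already realizes the exact greedy maximum over the surviving candidates; one then argues that pruning never discards an element whose true marginal gain exceeds the anchor gain, so the genuine greedy maximizer is retained in $\mathcal{P}$ (up to the submodular slack that makes $\mathcal{F}$ only \emph{approximately} submodular).

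Next I would handle the early-exit phases. Let $\theta$ bound the fraction of phases terminated by dynamic supervision. On such phases the per-step contraction factor is weaker, but the stage-exit rule fires precisely when $\Delta_\alpha < \theta\,\Delta_{\mathrm{ref}}$, so the \emph{foregone} marginal gain is provably small relative to the already-banked anchor gain. I would bundle the lost contributions across all early-exit phases into a single additive error term and show, using monotonicity to guarantee $\mathcal{F}(S_{\mathrm{PhaseWin}})$ never decreases, that this aggregate loss is bounded by a quantity that vanishes as $k\to\infty$ and as the hyperparameters anneal. Concretely, the recursion becomes
\[
  \mathcal{F}(\mathcal{S}_{OPT}) - \mathcal{F}(S_i)
  \;\le\;
  \Bigl(1 - \tfrac{1-\varepsilon}{k}\Bigr)\bigl(\mathcal{F}(\mathcal{S}_{OPT}) - \mathcal{F}(S_{i-1})\bigr) + \eta_i,
\]
where $\eta_i$ is nonzero only on early-exit steps and $\sum_i \eta_i = \mathbf{o}(1)\cdot\mathcal{F}(\mathcal{S}_{OPT})$.

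I would then unroll this perturbed recursion over the $k$ steps. Using $(1-x)^k \le e^{-kx}$ with $x=(1-\varepsilon)/k$ gives the leading factor $1 - e^{-(1-\varepsilon)} \ge 1 - 1/e - \varepsilon/e - \mathbf{o}(1)$, and the accumulated additive errors $\eta_i$ contribute only a further $\mathbf{o}(1)$ term once multiplied by the geometric sum of the contraction factors. Collecting the window-slack $\varepsilon$, the approximate-submodularity slack, and the early-exit budget $\theta$ into the single vanishing quantity $\mathbf{o}(1)$ yields exactly $\mathcal{F}(S_{\mathrm{PhaseWin}}) \ge \bigl(1 - \tfrac{1}{e} - \mathbf{o}(1)\bigr)\mathcal{F}(\mathcal{S}_{OPT})$.

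The main obstacle I anticipate is making the $\mathbf{o}(1)$ bookkeeping honest rather than cosmetic. In particular, one must verify that the three distinct sources of slack --- the windowing approximation $\varepsilon$, the $\theta$-fraction of early exits, and the deviation from exact submodularity of the VPS objective $\mathcal{F}$ --- can genuinely be driven to zero (or absorbed) simultaneously without a hidden dependence that forces one of them to stay $\Theta(1)$. The deferral/annealing mechanism is especially delicate here: because it can postpone an element to a later phase, a careless analysis could let deferred gains compound and break the clean geometric contraction. I would address this by charging each deferred element's gain to the phase in which it is ultimately accepted and invoking monotonicity to ensure deferral never lowers the attained objective, so that annealing only reorders contributions rather than destroying them.
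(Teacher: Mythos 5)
Your overall skeleton is the same as the paper's: establish a per-step approximate-greedy inequality and unroll a perturbed contraction over the $k$ steps. Where the paper writes the per-step ratio as $\alpha\beta_l$ (selection-threshold ratio times window-policy ratio), you write $(1-\varepsilon)$; where the paper unrolls $\mathcal{F}(S_{\mathrm{OPT}})\le \tfrac{k}{\alpha\beta_l}\mathcal{F}(S_l)-\bigl(\tfrac{k}{\alpha\beta_l}-1-k\gamma\bigr)\mathcal{F}(S_{l-1})$ into the product bound $\lambda_k+\mu_k\lambda_{k-1}+\cdots+\mu_k\cdots\mu_2\lambda_1$, you unroll an equivalent recursion with $(1-\tfrac{1-\varepsilon}{k})$ and additive $\eta_i$. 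So far this is the paper's route in different notation.

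However, there is a genuine gap in your handling of pruning, which is precisely the term the paper's proof is built around. First, your sublemma is false as stated: the true greedy maximizer need \emph{not} be retained in $\mathcal{P}$ --- a candidate with cached gain in the ambiguous band $[\tau_{\mathrm{del}},\tau_{\mathrm{sel}})$ is deferred to $\mathcal{R}_{\mathrm{next}}$, not placed in the pool. The paper never needs retention; it only needs the ratio bound $\rho_{v_l}(S_{l-1})\ge \alpha\beta_l\max_{j\in\mathcal{R}_l}\rho_j(S_{l-1})$ over \emph{surviving} candidates. Second, and more importantly, the classical step $\mathcal{F}(S_{\mathrm{OPT}})\le \mathcal{F}(S_{l-1})+\sum_{j\in \mathrm{OPT}\setminus S_{l-1}}\rho_j(S_{l-1})$ must be split over survivors and \emph{previously deleted} elements, since deleted elements of $\mathrm{OPT}$ are no longer dominated by the max over $\mathcal{R}_l$. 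The paper bounds the deleted part by submodularity --- each such element's gain was below $\gamma a_m$ at its deletion time and can only shrink --- giving the extra term $k\gamma\sum_{m<l}\rho_{v_m}(S_{m-1})=k\gamma\,\mathcal{F}(S_{l-1})$, which perturbs \emph{every} step of the recursion and forces the deletion ratio $\gamma=\rho_{\mathrm{del}}$ to vanish faster than $1/k$ for the final $\mathbf{o}(1)$. Your plan confines the additive errors $\eta_i$ to early-exit steps and your list of slack sources (window slack, early exits, non-submodularity of the VPS objective) omits the deletion threshold entirely, so the recursion you wrote is not established. A smaller divergence: the paper does not treat early exits as foregone-gain losses at all; the stage-exit threshold is used \emph{positively}, guaranteeing each accepted element satisfies $b_{i,j}\ge\beta_{i+j}b_{i,0}$, which is how $\theta$ enters the per-step ratio rather than an error budget. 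Your loss-charging scheme for deferral/annealing could likely be made to work, but it is extra machinery the paper's ratio-based accounting avoids.
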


\begin{remark}
    This $\mathbf{o}(1)$ quantity is actually determined by $\tau_{sel},\tau_{del},k,\theta$. We have put the proof of this theorem in Appendix~\ref{appendix:proof}.
\end{remark}

\begin{table*}[t]
\centering
\small
\setlength{\tabcolsep}{5.5pt}
\renewcommand\arraystretch{1.15}
\caption{Comparison on three datasets for correctly detected or grounded samples using Grounding DINO.}
\label{tab:correct}
\resizebox{\textwidth}{!}{%
\begin{tabular}{
   l l
  !{\color{gray!40}\vrule}
  S[table-format=1.4] S[table-format=1.4] S[table-format=1.4] 
  S[table-format=1.4] S[table-format=1.4] S[table-format=1.4] 
  S[table-format=1.4]
  !{\color{gray!40}\vrule}
  S[table-format=1.4] S[table-format=1.4]
  !{\color{gray!40}\vrule}
  c c
}
\toprule
\multirow{3}{*}{\textbf{Datasets}} & \multirow{3}{*}{\textbf{Methods}} &
\multicolumn{7}{c!{\color{gray!40}\vrule}}{Faithfulness Metrics} &
\multicolumn{2}{c!{\color{gray!40}\vrule}}{Location Metrics} &
\multicolumn{2}{c}{\textbf{Efficiency Metrics}} \\
\cmidrule(lr){3-9}\cmidrule(lr){10-11}\cmidrule(lr){12-13}
& &
\mc{\shortstack{Ins.\\(↑)}} &
\mc{\shortstack{Del.\\(↓)}} &
\mc{\shortstack{Ins.~(class)\\(↑)}} &
\mc{\shortstack{Del.~(class)\\(↓)}} &
\mc{\shortstack{Ins.~(IoU)\\(↑)}} &
\mc{\shortstack{Del.~(IoU)\\(↓)}} &
\mc{\shortstack{Ave.~high.\\score~(↑)}} &
\mc{\shortstack{Point\\Game~(↑)}} &
\mc{\shortstack{Energy\\PG~(↑)}} &
\mc{\shortstack{$\text{MEC}_{\text{ave}}$\\~(↓)}} &
\mc{\shortstack{\textbf{A-C ratio}\\~(↑)}}
\\
\midrule
\multirow{9}{*}{\shortstack[l]{MS COCO~\citep{lin2014microsoft}\\(Detection task)}} & Grad-CAM~\citep{selvaraju2020grad}
& 0.2436 & 0.1526 & 0.3064 & 0.2006 & 0.6229 & 0.5324 & 0.5904 & 0.1746 & 0.1463 & \na & \na \\
& SSGrad-CAM++~\citep{yamauchi2022spatial} & 0.2107 & 0.1778 & 0.2639 & 0.2314 & 0.5981 & 0.5511 & 0.5886 & 0.1905 & 0.1293 & \na & \na \\
& D-RISE~\citep{petsiuk2018rise}        & 0.4412 & 0.0402 & 0.5081 & 0.0886 & 0.8396 & 0.3642 & 0.6215 & 0.9497 & 0.1850 & 5000 & 0.88 \\
& D-HSIC~\citep{novello2022making}        & 0.3776 & 0.0439 & 0.4382 & 0.0903 & 0.8301 & 0.3301 & 0.5862 & 0.7328 & 0.1861 & 1536 & 2.46 \\
& ODAM~\citep{zhao2024gradient}          & 0.3103 & 0.0519 & 0.3655 & 0.0894 & 0.7869 & 0.3984 & 0.5865 & 0.5431 & 0.2034 & \na & \na \\
\cmidrule(lr){2-13} 
& VPS(Greedy)-50~\cite{chen2025vps}          & 0.5195 & \best{0.0375} & 0.5941 & \best{0.0835} & 0.8480 & \best{0.3044} & 0.6591 & 0.9841 & \best{0.2046} & 2548.8 & 2.04 \\
\rowcolor{oursrow}
\cellcolor{white} & PhaseWin-50          & 0.4785 & 0.0424 & 0.5562 & 0.0898 & 0.8323 & 0.3116 & 0.6353 & \best{0.9894} & 0.1843 & \best{536.8} & \best{8.92} \\
\cmidrule(lr){2-13} 
& VPS(Greedy)-100~\citep{chen2025vps}          & \best{0.5459} & \best{0.0375} & \best{0.6204} & 0.0882 & \best{0.8581} & 0.3300 & \best{0.6873} & \best{0.9894} & \best{0.2046} & 10100& 0.54 \\
\rowcolor{oursrow}
\cellcolor{white} & PhaseWin-100         & 0.5141 & 0.0410 & 0.5890 & 0.0907 & 0.8505 & 0.3400 & 0.6644 & \best{0.9894} & 0.1628 & 2853.4 & 1.81 \\
\midrule
\multirow{9}{*}{\shortstack[l]{RefCOCO~\citep{kazemzadeh2014referitgame}\\(REC task)}} & Grad-CAM~\citep{selvaraju2020grad}
& 0.3749 & 0.4237 & 0.4658 & 0.5194 & 0.7516 & 0.7685 & 0.7481 & 0.2380 & 0.2171 & \na & \na \\
& SSGrad-CAM++~\citep{yamauchi2022spatial} & 0.4113 & 0.3925 & 0.5008 & 0.4851 & 0.7700 & 0.7588 & 0.7561 & 0.2820 & 0.2262 & \na & \na \\
& D-RISE~\citep{petsiuk2018rise}        & 0.6178 & 0.1605 & 0.7033 & 0.3396 & 0.8606 & 0.5164 & 0.8471 & 0.9400 & 0.2870 & 5000 & 1.24 \\
& D-HSIC~\citep{novello2022making}        & 0.5491 & 0.1846 & 0.6295 & 0.3509 & 0.8504 & 0.5120 & 0.7739 & 0.7900 & 0.3190 & 1536 & 3.57 \\
& ODAM~\citep{zhao2024gradient}          & 0.4778 & 0.2718 & 0.5620 & 0.3757 & 0.8217 & 0.6641 & 0.7425 & 0.6320 & 0.3529 & \na & \na \\
\cmidrule(lr){2-13} 
& VPS(Greedy)-50~\citep{chen2025vps}          & 0.7278 & \best{0.1240} & 0.7995 & 0.2473 & 0.8961 & \best{0.5053} & 0.8770 & \best{0.9580} & \best{0.3738} & 2290.6 & 3.18 \\
\rowcolor{oursrow}
\cellcolor{white} & PhaseWin-50          & 0.7013 & 0.1473 & 0.7794 & 0.2747 & 0.8862 & 0.5273 & 0.8654 & \best{0.9580} & 0.3530 & \best{630.1} & \best{11.13} \\
\cmidrule(lr){2-13} 
& VPS(Greedy)-100~\citep{chen2025vps}         & \best{0.7419} & 0.1250 & \best{0.8080} & \best{0.2457} & 0.9050 & 0.5103 & \best{0.8842} & 0.9460 & 0.3566 & 10100 & 0.73 \\
\rowcolor{oursrow}
\cellcolor{white} & PhaseWin-100          & 0.7377 & 0.1529 & 0.8046 & 0.2823 & \best{0.9054} & 0.5466 & 0.8813 & 0.9360 & 0.3076 & 3382.5 & 2.18 \\
\midrule
\multirow{9}{*}{\shortstack[l]{LVIS V1~\citep{gupta2019lvis} (rare)\\(Zero-shot det. task)}} & Grad-CAM~\citep{selvaraju2020grad}
& 0.1253 & 0.1294 & 0.1801 & 0.1814 & 0.5657 & 0.5910 & 0.3549 & 0.1151 & 0.0941 & \na & \na \\
& SSGrad-CAM++~\citep{yamauchi2022spatial} & 0.1253 & 0.1254 & 0.1765 & 0.1775 & 0.5800 & 0.5691 & 0.3504 & 0.1091 & 0.0931 & \na & \na \\
& D-RISE~\citep{petsiuk2018rise}        & 0.2808 & 0.0289 & 0.3348 & 0.0835 & 0.8303 & 0.3174 & 0.4289 & 0.9697 & 0.1462 & 5000 & 0.56 \\
& D-HSIC~\citep{novello2022making}        & 0.2417 & 0.0353 & 0.2912 & 0.0928 & 0.8187 & 0.3550 & 0.4044 & 0.8303 & 0.1730 & 1536 & 1.57 \\
& ODAM~\citep{zhao2024gradient}          & 0.2009 & 0.0410 & 0.2478 & 0.0844 & 0.7770 & 0.4082 & 0.3694 & 0.6061 & \best{0.2050} & \na & \na \\
\cmidrule(lr){2-13} 
& VPS(Greedy)-50~\citep{chen2025vps}          & 0.3411 & \best{0.0265} & 0.3995 & 0.0805 & 0.8372 & \best{0.2986} & 0.4654 & \best{0.9939} & 0.1439 & 2544.6 & 1.34 \\
\rowcolor{oursrow}
\cellcolor{white} & PhaseWin-50          & 0.3071 & 0.0303 & 0.3645 & 0.0893 & 0.8245 & 0.3097 & 0.4325 & \best{0.9939} & 0.1369 & \best{465.9} & \best{6.59} \\
\cmidrule(lr){2-13} 
& VPS(Greedy)-100~\citep{chen2025vps}          & \best{0.3695} & 0.0277 & \best{0.4275} & \best{0.0799} & \best{0.8479} & 0.3242 & \best{0.4969} & 0.9758 & 0.1785 & 10100 & 0.37 \\
\rowcolor{oursrow}
\cellcolor{white} & PhaseWin-100          & 0.3363 & 0.0309 & 0.3944 & 0.0839 & 0.8379 & 0.3374 & 0.4688 & 0.9697 & 0.1175 & 2726.8 & 1.23 \\
\bottomrule
\end{tabular}}
\end{table*}

\begin{table}[ht]
\centering
\caption{Comparison of approximation guarantee, complexity, and empirical acceleration.
$k$ denotes the subset size, $m$ the total candidate set size, and $\varepsilon$ the maximal early-exit ratio.}
\label{tab:approx_complexity}
\resizebox{\columnwidth}{!}{%
\begin{tabular}{lcccc}
\toprule
\textbf{Method} & \textbf{Approx. Guarantee} & \textbf{\# Marginal Evals} & \textbf{Complexity} & \textbf{Empirical Speedup} \\
\midrule
Greedy       & $(1 - 1/e)$ & $ \mathcal{O}(mk)$ & Quadratic & $1\times$ \\
Lazy Greedy  & $(1 - 1/e)$ & $\sim 0.7\,mk$ & Sub-quadratic & $\sim1.3\times$ \\
\textbf{PhaseWin} & $(1 - 1/e - \varepsilon)$ & $ \mathcal{O}(m)$ & Near-linear & $\mathbf{5\text{--}10\times}$ \\
\bottomrule
\end{tabular}}%
\end{table}

\paragraph{Time Complexity Analysis.}
Since the forward evaluation of the scoring function $\mathcal{F}(\cdot)$ dominates runtime,
we analyze complexity in terms of the number of calls to $\mathcal{F}$.

            With dynamic supervision, each phase aggressively prunes the candidate pool and
            probabilistically terminates when marginal gains diminish. Let $N_{\mathrm{exit}}$ be the expected number of early-exited phases.
            The expected number of calls is:
            \[
                \mathbb{E}[\#\text{calls}]
                \;=\;
                O\Bigl(
                    m \cdot f(\omega)
                    + m \cdot N_{\mathrm{exit}}
                \Bigr),
            \]
            where $w$ is the window size, and $f(\omega)$ is determined by the window policy $\pi$. For $\pi_{\mathrm{LG}}$ , $f(\omega)=\omega$, for $\pi_{\mathrm{BA}}$, $f(\omega)=\log(\omega)$, so the effective complexity approaches $\mathcal{O}(m)$ if $\omega << m$.

Thus, PhaseWin achieves \emph{greedy-level accuracy} while reducing the
number of expensive scoring calls by up to an order of magnitude in practice. The above theoretical analysis takes into account ideal situations and makes full use of the submodularity assumption. Our experiments confirm its high efficiency. The definitions of submodularity and supermodularity and their corresponding AUC curve properties are in Appendix~\ref{appendix:submodularity}.

\section{Experiments}
\label{exp}
\subsection{Experimental Setup}
We conduct a comprehensive evaluation of our method on object detection and referring expression comprehension (REC) tasks. The experiments are performed using two powerful object-level foundation models: Grounding DINO~\citep{liu2023grounding} and Florence-2~\citep{xiao2024florence}.

\textbf{Datasets and Baselines.}
We conduct experiments on three benchmarks. MS COCO 2017~\citep{lin2014microsoft} covers 80 object classes; we sample correctly detected, misclassified, and undetected instances for evaluation. LVIS V1~\citep{gupta2019lvis} spans 1,203 categories with 337 rare ones, where Grounding DINO~\citep{liu2023grounding} is used for zero-shot detection. RefCOCO~\citep{kazemzadeh2014referitgame} is adopted for the REC task, including both correct and incorrect grounding cases. We compare against representative attribution methods: gradient-based (Grad-CAM~\citep{selvaraju2020grad}, SSGrad-CAM++~\citep{yamauchi2022spatial}, ODAM~\citep{zhao2024gradient}), perturbation-based (D-RISE~\citep{petsiuk2018rise}, D-HSIC~\citep{novello2022making}), and the original greedy search(VPS~\citep{chen2025vps}), a greedy quadratic algorithm that serves as our acceleration target.

\textbf{Implementation Details.}
For PhaseWin, we adopt a default window size of 16 for 50 sub-regions and 32 for 100 sub-regions. 
Since the score function is not strictly monotonic submodular, we implement the stopping criterion using a numerically stable ratio-based formulation: 
$
\frac{S_{k-2}}{S_{k-1}} - \frac{S_{k-1}}{S_{k}} \leq \tau,
$ 
where we set $\tau=0.025$ for 50 sub-regions and $\tau=0.01$ for 100 sub-regions. 
Complete implementation details are provided in Appendix~\ref{app:impl}.

\subsection{Evaluation Metrics}
We evaluate the quality of attributions along three key axes: faithfulness, localization accuracy, and computational efficiency. This enables a holistic comparison of PhaseWin against all baselines.

\textbf{1. Faithfulness.}
We adopt standard insertion and deletion metrics to evaluate how well attribution maps reflect the model’s reasoning, applied to both classification confidence and IoU. We also report the highest box confidence ($\mathrm{IoU}>0.5$) and the Explaining Successful Rate (ESR) for failure cases.

\textbf{2. Localization Accuracy.}
We follow prior work and use the Point Game~\citep{zhang2018top} and Energy Point Game metrics~\citep{wang2020score} to quantify the alignment between attribution maps and ground-truth objects.

\textbf{3. Efficiency.}
We measure runtime efficiency using Model Evaluation Count (MEC), where one unit corresponds to a single forward pass. To combine accuracy and cost, we also report the Accuracy–Cost Ratio (AC-Ratio). These two indicators reflect the actual number of model calls required and the efficiency of the attribution algorithm, respectively, and are therefore not restricted by hardware. Details of the above metrics are provided in Appendix~\ref{app:metrics}.

\begin{table}[!t] 
\centering
\small
\setlength{\tabcolsep}{5.5pt}
\renewcommand\arraystretch{1.15}
\caption{Evaluation of faithfulness (Insertion/Deletion AUC) and efficiency metrics on MS COCO and RefCOCO validation sets (Florence-2).}
\label{tab:vps_t2_aligned_eff}
\resizebox{0.47\textwidth}{!}{%
\begin{tabular}{
   l l
  !{\color{gray!40}\vrule}
  S[table-format=1.4] S[table-format=1.4]
  !{\color{gray!40}\vrule}
  c c
}
\toprule
\multirow{2}{*}{\textbf{Datasets}} & \multirow{2}{*}{\textbf{Methods}} &
\multicolumn{2}{c!{\color{gray!40}\vrule}}{Faithfulness Metrics} &
\multicolumn{2}{c}{\textbf{Efficiency Metrics}} \\
\cmidrule(lr){3-4}\cmidrule(lr){5-6}
& &
\mc{\shortstack{Insertion (↑)}} &
\mc{\shortstack{Deletion (↓)}} &
\mc{\shortstack{$\text{MEC}_{\text{ave}}$ (↓)}} &
\mc{\shortstack{\textbf{A-C ratio} (↑)}} \\
\midrule
\multirow{4}{*}{\shortstack[l]{MS COCO\\(Detection task)}} 
& D-RISE & 0.7477 & 0.0972 & 5000 & 1.50 \\
& D-HSIC & 0.5345 & 0.2730 & \best{1536} & 3.48 \\
\cmidrule(lr){2-6} 
& VPS (Greedy)-50 & \best{0.7678} & 0.0550 & 2548.1 & 2.98 \\
& \cellcolor{oursrow} PhaseWin-50 
& \cellcolor{oursrow} 0.7615 
& \cellcolor{oursrow} \best{0.0474} 
& \cellcolor{oursrow} 2184.1 
& \cellcolor{oursrow} \best{3.49} \\
\midrule
\multirow{4}{*}{\shortstack[l]{RefCOCO\\(REC task)}} 
& D-RISE & 0.7922 & 0.3505 & 5000 & 1.24 \\
& D-HSIC & 0.7639 & 0.3560 & \best{1536} & \best{3.57} \\
\cmidrule(lr){2-6}
& VPS (Greedy)-50 & 0.8301 & \best{0.1159} & 2547.8 & 3.25 \\
& \cellcolor{oursrow} PhaseWin-50
& \cellcolor{oursrow} \best{0.8312} 
& \cellcolor{oursrow} 0.1205 
& \cellcolor{oursrow} 2349.1 
& \cellcolor{oursrow} 3.53 \\
\bottomrule
\end{tabular}} 
\end{table}

\subsection{Faithfulness Analysis on correct samples}
\subsubsection{Correct Interpretation on Grounding DINO}

We conduct faithfulness, locality, and efficiency tests on correct detection. Table~\ref{tab:correct} summarizes the results on correctly detected or grounded samples across three benchmarks. On the MS COCO detection task, PhaseWin substantially improves efficiency while maintaining a comparable level of faithfulness. Under the 50-region setting, it reduces the average model evaluations from 2548.8 to 536.8, a 4.7$\times$ reduction, with only a minor decrease in the Insertion score (0.5195 to 0.4785). This trade-off yields a marked improvement in the A-C ratio from 2.04 to 8.92. For the RefCOCO referring expression comprehension benchmark, PhaseWin achieves similar faithfulness to VPS (Greedy), with an Insertion score of 0.7013 versus 0.7278, while reducing model evaluations from 2290.6 to 630.1. This efficiency gain elevates the A-C ratio from 3.18 to 11.13, showing that PhaseWin produces high-quality attributions at a fraction of the cost. On the challenging LVIS v1 rare-class detection task, both VPS (Greedy) and PhaseWin show reduced overall faithfulness due to long-tail distributions. Nevertheless, PhaseWin lowers the computation demand from 2544.6 to 465.9 evaluations in the 50-region setting, improving the A-C ratio from 1.34 to 6.59. These results highlight that the efficiency benefits of PhaseWin become particularly valuable in computationally intensive scenarios, making attribution analysis more practical at scale.

\subsubsection{Correct Interpretation on Florence-2}

Table~\ref{tab:vps_t2_aligned_eff} reports results on MS COCO and RefCOCO when using Florence-2 as the backbone. Across both datasets, PhaseWin achieves faithfulness scores that are highly comparable to VPS (Greedy). On MS COCO, PhaseWin attains an Insertion score of 0.7615 versus 0.7678 from VPS (Greedy), with a slightly lower Deletion value (0.0474 vs.\ 0.0550). Similarly, on RefCOCO, PhaseWin produces an Insertion of 0.8312 against 0.8301 from VPS (Greedy), with a minor increase in Deletion. These results indicate that the acceleration strategy preserves the fidelity of VPS (Greedy) almost entirely.
When contrasted with perturbation-based baselines, PhaseWin consistently delivers higher faithfulness while requiring fewer model evaluations than D-RISE, and achieves efficiency comparable to D-HSIC but with stronger interpretability. The A-C ratio also reflects this balance: PhaseWin improves upon VPS (Greedy) (3.49 vs.\ 2.98 on COCO; 3.53 vs.\ 3.25 on RefCOCO), showing more favorable faithfulness-to-cost trade-offs.
It is worth noting that the acceleration gains are less pronounced compared to Grounding DINO. Florence-2 exhibits behavior that is nearly globally supermodular, while our acceleration relies on exploiting local submodularity. As discussed in Appendix~\ref{appendix:submodularity}, this structural property limits the extent of achievable speedup. Nevertheless, PhaseWin remains a strong alternative to VPS (Greedy), offering similar interpretability at reduced computational cost and outperforming other baselines across both benchmarks.

\subsection{Failures Interpretation}
\subsubsection{REC Failures Interpretation}

\begin{table}[!t]
\centering
\small
\setlength{\tabcolsep}{5.5pt}
\renewcommand\arraystretch{1.15}
\caption{RefCOCO (REC task): Faithfulness metrics and efficiency (Grounding DINO).}
\label{tab:refcoco_faith_eff}
\resizebox{0.47\textwidth}{!}{%
\begin{tabular}{
   l l
  !{\color{gray!40}\vrule}
  S[table-format=1.4] S[table-format=1.4] S[table-format=1.4]
  !{\color{gray!40}\vrule}
  c c
}
\toprule
\multirow{2}{*}{\textbf{Datasets}} & \multirow{2}{*}{\textbf{Methods}} &
\multicolumn{3}{c!{\color{gray!40}\vrule}}{Faithfulness Metrics} &
\multicolumn{2}{c}{\textbf{Efficiency Metrics}} \\
\cmidrule(lr){3-5}\cmidrule(lr){6-7}
& &
\mc{\shortstack{Ins. (↑)}} &
\mc{\shortstack{Ins.~(class) (↑)}} &
\mc{\shortstack{Ave.~high.score~ (↑)}} &
\mc{\shortstack{$\text{MEC}_{\text{ave}}$ (↓)}} &
\mc{\shortstack{\textbf{A-C ratio} (↑)}} \\
\midrule
\multirow{8}{*}{\shortstack[l]{RefCOCO\\(REC task)}}
& Grad-CAM        & 0.1536 & 0.2794 & 0.3295 & \na  & \na  \\
& SSGrad-CAM++    & 0.1590 & 0.2837 & 0.3266 & \na  & \na  \\
& D-RISE          & 0.3486 & 0.4787 & 0.6096 & 5000 & 1.21 \\
& D-HSIC          & 0.2274 & 0.3488 & 0.4495 & 1536 & 2.92 \\
& ODAM            & 0.1793 & 0.3001 & 0.3453 & \na  & \na  \\
\cmidrule(lr){2-7}
& VPS (Greedy)-100            & 0.4981 & 0.5990 & 0.7007 & 10100  & 0.69  \\
& \cellcolor{oursrow} PhaseWin-50
  & \cellcolor{oursrow} 0.4455
  & \cellcolor{oursrow} 0.5537
  & \cellcolor{oursrow} 0.6437
  & \cellcolor{oursrow} \best{614.4}
  & \cellcolor{oursrow} \best{10.48} \\
& \cellcolor{oursrow} PhaseWin-100
  & \cellcolor{oursrow} \best{0.5047}
  & \cellcolor{oursrow} \best{0.6023}
  & \cellcolor{oursrow} \best{0.7116}
  & \cellcolor{oursrow} 3164.4
  & \cellcolor{oursrow} 2.25 \\
\bottomrule
\end{tabular}}
\end{table}

Table~\ref{tab:refcoco_faith_eff} presents results on RefCOCO samples where Grounding DINO produces incorrect grounding. Compared with gradient-based baselines such as Grad-CAM and ODAM, both VPS (Greedy) and PhaseWin yield substantially higher insertion scores and average confidence, indicating that search-based attribution is better suited for recovering meaningful evidence under failure cases. Perturbation-based approaches like D-RISE and D-HSIC achieve moderate improvements, but remain less faithful overall.  
Between the two search variants, PhaseWin attains an attribution quality that is highly comparable to VPS (Greedy). Under the 100-region setting, PhaseWin slightly surpasses VPS (Greedy) in insertion and classification-based scores (0.5047 vs.\ 0.4981 and 0.6023 vs.\ 0.5990), while using fewer model evaluations (3164.4 vs.\ 10100). In the 50-region setting, PhaseWin achieves somewhat lower insertion metrics than VPS (Greedy) but with drastically reduced computational demand (614.4 vs.\ 10100 evaluations). This efficiency translates into a much higher A-C ratio, rising from 0.69 with VPS (Greedy) to 10.48 with PhaseWin.  
These results suggest that PhaseWin can provide nearly the same level of interpretability as the greedy search, while significantly reducing the computational cost. This advantage is especially valuable when analyzing mis-grounded instances, where large-scale evaluation would otherwise be prohibitive.

\subsubsection{Misclassified Detection Failures Interpretation}

\begin{table}[!t]
\centering
\small
\setlength{\tabcolsep}{5.5pt}
\renewcommand\arraystretch{1.15}
\caption{MS COCO and LVIS (misclassified samples): Faithfulness metrics and efficiency (Grounding DINO)}
\label{tab:miscls_faith_eff}
\vspace{-6pt}
\resizebox{0.47\textwidth}{!}{%
\begin{tabular}{
   l l
  !{\color{gray!40}\vrule}
  S[table-format=1.4] S[table-format=1.4] S[table-format=1.4] c
  !{\color{gray!40}\vrule}
  c c
}
\toprule
\multirow{2}{*}{\textbf{Datasets}} & \multirow{2}{*}{\textbf{Methods}} &
\multicolumn{4}{c!{\color{gray!40}\vrule}}{Faithfulness Metrics} &
\multicolumn{2}{c}{\textbf{Efficiency Metrics}} \\
\cmidrule(lr){3-6}\cmidrule(lr){7-8}
& &
\mc{\shortstack{Ins.~(↑)}} &
\mc{\shortstack{Ins.~(class)~(↑)}} &
\mc{\shortstack{Ave.~high.~score~(↑)}} &
\mc{\shortstack{ESR~(↑)}} &
\mc{\shortstack{$\text{MEC}_{\text{ave}}$~(↓)}} &
\mc{\shortstack{\textbf{A-C ratio}~(↑)}} \\
\midrule
\multirow{8}{*}{\shortstack[l]{MS COCO\\(Detection task)}}
& Grad-CAM        & 0.1091 & 0.1478 & 0.3102 & 38.38\% & \na  & \na  \\
& SSGrad-CAM++    & 0.0960 & 0.1336 & 0.2952 & 33.51\% & \na  & \na  \\
& D-RISE          & 0.2170 & 0.2661 & 0.3603 & 50.26\% & 5000 & 0.72 \\
& D-HSIC          & 0.1771 & 0.2161 & 0.3143 & 34.59\% & 1536 & 2.04 \\
& ODAM            & 0.1129 & 0.1486 & 0.2869 & 32.97\% & \na  & \na  \\
\cmidrule(lr){2-8}
& VPS (Greedy)-100            & \best{0.3357} & \best{0.3967} & \best{0.4591} & \best{69.73\%} & 10100  & 0.45  \\
& \cellcolor{oursrow} PhaseWin-50
  & \cellcolor{oursrow} 0.2614
  & \cellcolor{oursrow} 0.3198
  & \cellcolor{oursrow} 0.3770
  & \cellcolor{oursrow} 51.35\%
  & \cellcolor{oursrow} \best{477.3}
  & \cellcolor{oursrow} \best{7.90} \\
& \cellcolor{oursrow} PhaseWin-100
  & \cellcolor{oursrow} 0.3018
  & \cellcolor{oursrow} 0.3583
  & \cellcolor{oursrow} 0.4289
  & \cellcolor{oursrow} 63.78\%
  & \cellcolor{oursrow} 2595.0
  & \cellcolor{oursrow} 1.65 \\
\midrule
\multirow{8}{*}{\shortstack[l]{LVIS V1 (rare)\\(Zero-shot det. task)}}
& Grad-CAM        & 0.0503 & 0.0891 & 0.1564 & 12.50\% & \na  & \na  \\
& SSGrad-CAM++    & 0.0574 & 0.0946 & 0.1580 & 11.84\% & \na  & \na  \\
& D-RISE          & 0.1245 & 0.1647 & 0.2088 & 28.95\% & 5000 & 0.41 \\
& D-HSIC          & 0.0963 & 0.1247 & 0.1748 & 16.45\% & 1536 & 1.14 \\
& ODAM            & 0.0575 & 0.0954 & 0.1520 & 9.21\%  & \na  & \na  \\
\cmidrule(lr){2-8}
& VPS (Greedy)-100            & \best{0.1776} & \best{0.2190} & \best{0.2606} & \best{43.29\%} & 10100  & 0.26  \\
& \cellcolor{oursrow} PhaseWin-50
  & \cellcolor{oursrow} 0.1394
  & \cellcolor{oursrow} 0.1817
  & \cellcolor{oursrow} 0.2119
  & \cellcolor{oursrow} 36.63\%
  & \cellcolor{oursrow} \best{426.5}
  & \cellcolor{oursrow} \best{5.20} \\
& \cellcolor{oursrow} PhaseWin-100
  & \cellcolor{oursrow} 0.1475
  & \cellcolor{oursrow} 0.1845
  & \cellcolor{oursrow} 0.2296
  & \cellcolor{oursrow} 39.47\%
  & \cellcolor{oursrow} 2204.8
  & \cellcolor{oursrow} 1.04 \\
\bottomrule
\end{tabular}}
\end{table}

Table~\ref{tab:miscls_faith_eff} reports results on misclassified samples from MS COCO and LVIS. Gradient-based methods such as Grad-CAM and ODAM show limited utility in this setting, while perturbation-based baselines (D-RISE and D-HSIC) provide moderate improvements in insertion and class-specific scores. Both VPS (Greedy) and PhaseWin yield higher overall faithfulness, indicating that search-based approaches are better suited to reveal discriminative regions responsible for misclassification. 
On MS COCO, VPS (Greedy) achieves the strongest raw faithfulness metrics, with an Insertion score of 0.3357 and an ESR of 69.73\%. PhaseWin produces slightly lower attribution quality under both 50- and 100-region settings, but substantially reduces computational requirements. In particular, PhaseWin-50 lowers the average model evaluations from 10100 to only 477.3, raising the A-C ratio from 0.45 to 7.90. This demonstrates that PhaseWin can provide competitive interpretability while making failure case analysis far more efficient.
On LVIS rare-class detection, all methods perform worse due to the long-tail distribution, but the same trend holds. VPS (Greedy) delivers the highest insertion and ESR values, while PhaseWin achieves comparable results at a fraction of the computational cost. PhaseWin-50 reduces the model evaluations by over 20$\times$ compared to VPS (Greedy), yielding an A-C ratio of 5.20 versus 0.26. These results show that PhaseWin remains practical for large-scale misclassification analysis, where the quadratic cost of full VPS (Greedy) would be prohibitive.

\begin{figure*}[t!]
    \centering
    \includegraphics[width=\textwidth]{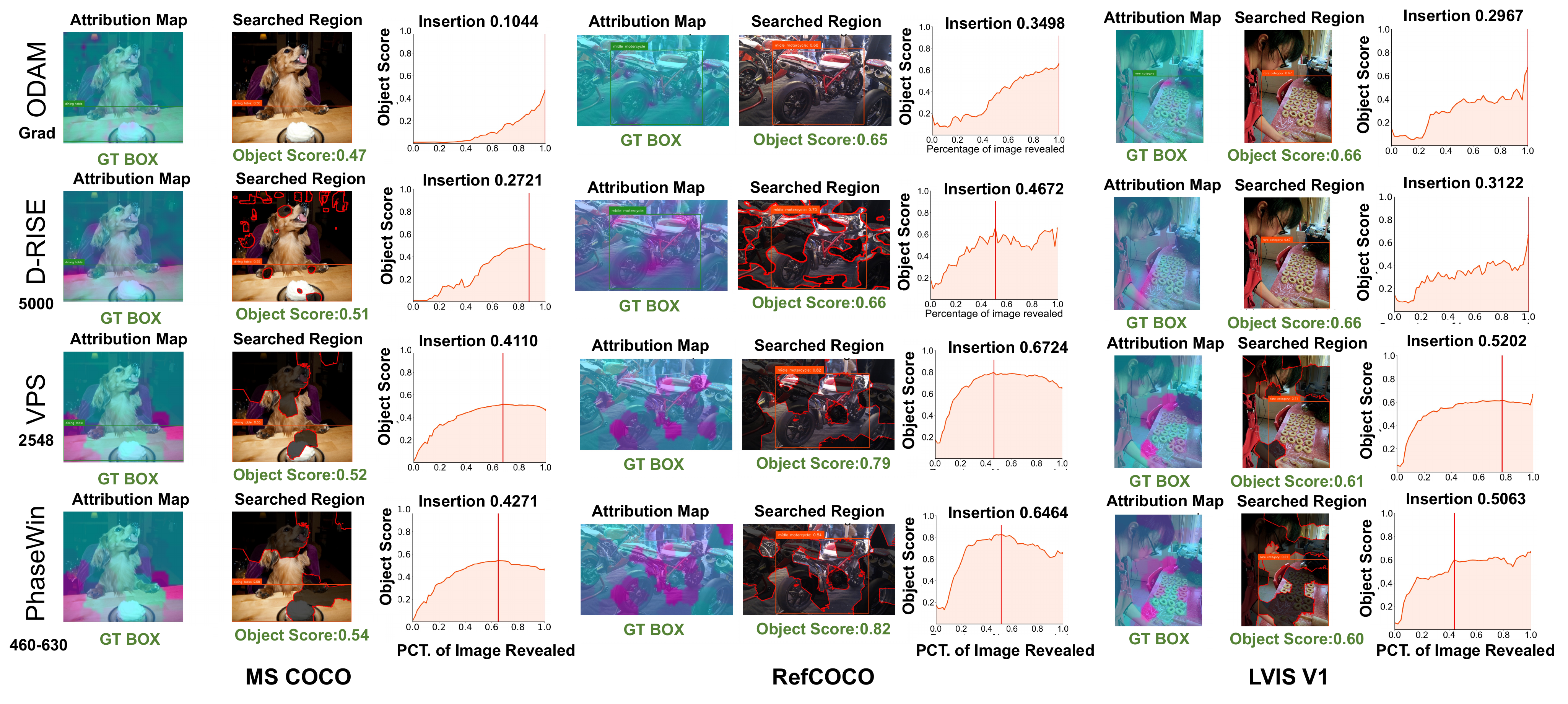}
    \caption{Visualization of correct attribution cases on MS COCO, RefCOCO, and LVIS V1. 
    Compared with ODAM and D-RISE, PhaseWin produces sharper and more faithful attributions. 
    It matches or even exceeds VPS (Greedy) in insertion-AUC while requiring only $\sim$20\% of its computational cost.}
    \label{fig:vis}
\end{figure*}

\subsubsection{Undetected Detection Failures Interpretation}

\begin{table}[!t]
\centering
\small
\setlength{\tabcolsep}{5.5pt}
\renewcommand\arraystretch{1.15}
\caption{MS COCO and LVIS (undetected failure samples): Faithfulness metrics and efficiency (Grounding DINO).}
\label{tab:undetected_faith_eff}
\vspace{-6pt}
\resizebox{0.47\textwidth}{!}{%
\begin{tabular}{
   l l
  !{\color{gray!40}\vrule}
  S[table-format=1.4] S[table-format=1.4] S[table-format=1.4] c
  !{\color{gray!40}\vrule}
  c c
}
\toprule
\multirow{2}{*}{\textbf{Datasets}} & \multirow{2}{*}{\textbf{Methods}} &
\multicolumn{4}{c!{\color{gray!40}\vrule}}{Faithfulness Metrics} &
\multicolumn{2}{c}{\textbf{Efficiency Metrics}} \\
\cmidrule(lr){3-6}\cmidrule(lr){7-8}
& &
\mc{\shortstack{Ins.~(↑)}} &
\mc{\shortstack{Ins.~(class)~(↑)}} &
\mc{\shortstack{Ave.~high.~score~(↑)}} &
\mc{\shortstack{ESR~(↑)}} &
\mc{\shortstack{$\text{MEC}_{\text{ave}}$~(↓)}} &
\mc{\shortstack{\textbf{A-C ratio}~(↑)}} \\
\midrule
\multirow{8}{*}{\shortstack[l]{MS COCO\\(Detection task)}}
& Grad-CAM        & 0.0760 & 0.1321 & 0.2153 & 16.44\% & \na  & \na  \\
& SSGrad-CAM++    & 0.0671 & 0.1151 & 0.2124 & 16.44\% & \na  & \na  \\
& D-RISE          & 0.1538 & 0.2260 & 0.2564 & 26.94\% & 5000 & 0.31  \\
& D-HSIC          & 0.1101 & 0.1716 & 0.1945 & 13.56\% & 1536 & 1.43  \\
& ODAM            & 0.0745 & 0.1350 & 0.2037 & 13.78\% & \na  & \na  \\
\cmidrule(lr){2-8}
& VPS (Greedy)-100            & 0.2102 & 0.3011 & 0.3014 & 41.33\% & 10100  & 0.21  \\
& \cellcolor{oursrow} PhaseWin-50
  & \cellcolor{oursrow} 0.1801
  & \cellcolor{oursrow} 0.2641
  & \cellcolor{oursrow} 0.2726
  & \cellcolor{oursrow} 33.78\%
  & \cellcolor{oursrow} \best{427.8}
  & \cellcolor{oursrow} \best{6.37} \\
& \cellcolor{oursrow} PhaseWin-100
  & \cellcolor{oursrow} \best{0.2156}
  & \cellcolor{oursrow} \best{0.3045}
  & \cellcolor{oursrow} \best{0.3289}
  & \cellcolor{oursrow} \best{44.44\%}
  & \cellcolor{oursrow} 2160.2
  & \cellcolor{oursrow} 1.52 \\
\midrule
\multirow{8}{*}{\shortstack[l]{LVIS V1 (rare)\\(Zero-shot det. task)}}
& Grad-CAM        & 0.0291 & 0.0689 & 0.0901 & 5.43\% & \na  & \na  \\
& SSGrad-CAM++    & 0.0292 & 0.0680 & 0.0897 & 5.24\% & \na  & \na  \\
& D-RISE          & 0.0703 & 0.1184 & 0.1312 & 18.73\% & 5000 & 0.26  \\
& D-HSIC          & 0.0516 & 0.0920 & 0.1168 & 13.48\% & 1536 & 0.76  \\
& ODAM            & 0.0283 & 0.0716 & 0.0851 & 4.68\% & \na  & \na  \\
\cmidrule(lr){2-8}
& VPS (Greedy)-100            & \best{0.1155} & \best{0.1886} & \best{0.1784} & \best{30.15\%} & 10100  & 0.18  \\
& \cellcolor{oursrow} PhaseWin-50
  & \cellcolor{oursrow} 0.0787
  & \cellcolor{oursrow} 0.1286
  & \cellcolor{oursrow} 0.1309
  & \cellcolor{oursrow} 17.04\%
  & \cellcolor{oursrow} \best{348.4}
  & \cellcolor{oursrow} \best{3.76} \\
& \cellcolor{oursrow} PhaseWin-100
  & \cellcolor{oursrow} 0.0942
  & \cellcolor{oursrow} 0.0069
  & \cellcolor{oursrow} 0.1552
  & \cellcolor{oursrow} 24.72\%
  & \cellcolor{oursrow} 1509.1
  & \cellcolor{oursrow} 1.03 \\
\bottomrule
\end{tabular}}
\vspace{-8pt}
\end{table}

Table~\ref{tab:undetected_faith_eff} shows results on MS COCO and LVIS samples where the objects are not detected. In this challenging setting, gradient-based baselines such as Grad-CAM and ODAM yield low insertion and class-specific scores, reflecting limited explanatory power. Perturbation-based methods (D-RISE and D-HSIC) offer some improvements, but remain costly or less stable. In contrast, both VPS (Greedy) and PhaseWin deliver more reliable attribution maps, better capturing the evidence that is missing in undetected cases.  
On MS COCO, PhaseWin achieves faithfulness that is close to VPS (Greedy), with Insertion and ESR values of 0.2156 and 44.44\% under the 100-region setting, slightly exceeding VPS (Greedy). More importantly, it reduces the average model evaluations from 10100 to 2160.2, raising the A-C ratio from 0.21 to 1.52. Under the 50-region setting, the efficiency advantage is even more pronounced, with only 427.8 evaluations required and an A-C ratio of 6.37.  
On LVIS rare-class detection, overall faithfulness scores are lower due to the long-tail distribution, but the same pattern holds. VPS (Greedy) achieves the highest raw insertion metrics, while PhaseWin provides comparable results with far fewer model evaluations. For example, PhaseWin-50 requires just 348.4 evaluations compared to 10100 for VPS (Greedy), lifting the A-C ratio from 0.18 to 3.76.  
These results show that PhaseWin offers an effective trade-off in undetected failure analysis: it maintains interpretability comparable to VPS while drastically reducing computational cost, enabling practical attribution studies even on large-scale failure cases.

\subsection{Speed and Precision Ablation}

\begin{figure}[!t]
    \centering
    \includegraphics[width=0.9\columnwidth]{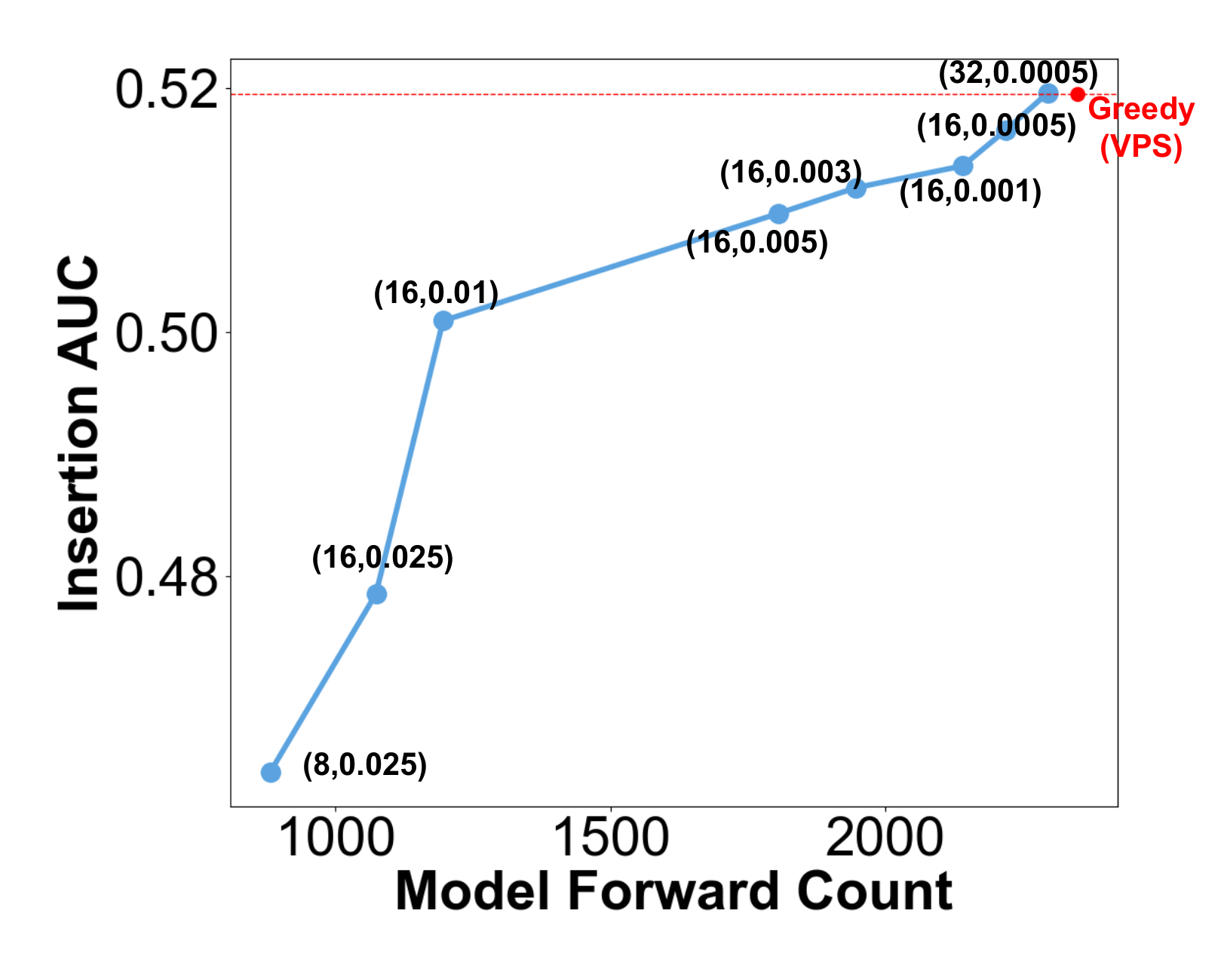}
    \caption{Trade-off between speed and precision.}
    \label{fig:af}
\end{figure}

An appealing property of our acceleration algorithm is the ability to balance efficiency and accuracy
through hyperparameter tuning. By slightly relaxing the speed constraint, PhaseWin-50 can steadily
improve its attribution quality. As shown in Figure~\ref{fig:af}, the insertion AUC increases
monotonically with the number of model forward passes, approaching the performance of the
greedy algorithm. Owing to the annealing strategy, our method can even surpass greedy search when
fully trading off speed, demonstrating that efficiency and precision can be adaptively controlled.

\subsection{Visualization}
We further present visualization results for correctly attributed cases,
As shown in Figure~\ref{fig:vis}, ODAM produces diffuse heatmaps, while D-RISE generates noisy regions due to perturbation sampling. 
VPS (Greedy)-50 yields sharp attributions but at a prohibitive computational cost. 
In contrast, our PhaseWin-50 achieves nearly the same attribution quality with only about 20\% of the overhead, and its annealing strategy often allows the max object score to surpass VPS (Greedy) by better exploring the maximum submodular subset. More visualization results are included in Appendix~\ref{appvis}.

\section{Conclusion}
\label{cls}
In this work, we addressed the challenge of efficient attribution for large multimodal foundation models in object detection. Building on the submodular hypothesis and task-specific properties, we proposed the PhaseWin algorithm as a principled and scalable alternative to the original greedy attribution search. PhaseWin preserves the fidelity advantages of greedy optimization while substantially reducing computational cost, achieving up to 95\% of state-of-the-art attribution fidelity using only 20\% of the evaluations. This establishes PhaseWin as a practical and effective solution for interpreting object-level multimodal models. Beyond object detection, the algorithm’s general design makes it broadly applicable to image-based attribution tasks, and we expect future work to explore its extension to a wider range of multimodal foundation models.

{
    \small
    \bibliographystyle{ieeenat_fullname}
    \bibliography{main}
}

\appendix
\newpage
\section{Window Selection Polices}
\label{appendix:window}
In this section, we first introduce the four algorithms (described in Table~\ref{tab:phasewin_policies}) we can choose from for the sub-process designed for a window of the phasewin algorithm.

First, the most basic approach is to apply greedy search within the window, which is also the slowest. Our three subsequent designs all use the submodularity assumption to varying degrees to reduce the number of searches within the window. $\pi_{BA}$ uses an adaptive scaling search strategy, $\pi_{T_2}$ considers two elements with the smallest reduction in combined return, and $\pi_{BAF}$ reduces the number of comparisons by maintaining an upper bound list.

\begin{table}[ht]
\centering
\caption{Window selection policies $\pi(\cdot)$ used within the \texttt{WindowSelection} subroutine.}
\label{tab:phasewin_policies}
\begin{tabular}{l p{0.7\linewidth}}
\toprule
\textbf{Policy} & \textbf{Description} \\
\midrule
$\pi_{\mathrm{LG}}$ & \textbf{Local-Greedy:} Picks the top candidate if its gain exceeds $\tau_{\mathrm{sel}}$. \\
$\pi_{\mathrm{BA}}$ & \textbf{Beta-Adaptive:} Selects all candidates above an adaptive cut-off based on the window's max gain. \\
$\pi_{\mathrm{T2}}$ & \textbf{Top-2:} Jointly selects the top two candidates if their gains are high and their relative gap is small. \\
$\pi_{\mathrm{BAF-B}}$ & \textbf{Batched Best-Above w/ Forward-checking:} Processes the window in batches, using cached gains to terminate early and reduce evaluations. \\
\bottomrule
\end{tabular}
\end{table}

\section{Complete Algorithm Process}
\label{appendix:process}
The algorithm operates in discrete phases. At the start of each phase, it performs a full evaluation on all remaining candidate regions ($\mathcal{R}$) and greedily selects the single best region to anchor the current search state. This ensures consistent progress. Based on the maximum marginal gain ($G_t$) observed in this step, it computes two adaptive thresholds: a selection threshold $\tau_{\mathrm{sel}}$ to identify high-potential candidates and a pruning threshold $\tau_{\mathrm{del}}$ to discard low-utility regions. This adaptive pruning strategy dynamically narrows the search space, focusing computational resources on the most promising regions.

For the initial phases (controlled by a hyperparameter $m_{\mathrm{active}}$), PhaseWin employs a sliding window of size $w$ over the sorted candidate pool $\mathcal{P}_t$. Within this window, a selection policy $\pi(\cdot)$---such as Beta-Adaptive (BA)---is applied to identify a batch of one or more regions for selection. This allows the model to select complementary regions simultaneously, a capability absent in naive greedy search. To further refine the candidate evaluation, a simulated annealing mechanism may defer the entry of lower-scoring regions into the window, allowing more promising candidates to be assessed first. After $m_{\mathrm{active}}$ phases, the algorithm transitions to a simplified greedy selection over the candidate pool to ensure convergence.

A key innovation of PhaseWin is its \emph{dynamic phase supervision}. We monitor the sequence of marginal gains of the selected regions, $\Delta_i = \mathcal{F}(S_i) - \mathcal{F}(S_{i-1})$. If the current gain drops precipitously compared to the previous one (i.e., $\Delta_i < \theta \cdot \Delta_{i-1}$, where $\theta$ is an adaptive supervision coefficient), it signals a potential breakdown of local submodularity. In this event, the algorithm calculates a probability $p_{\mathrm{exit}}$ to terminate the current phase prematurely. This probabilistic exit prevents the algorithm from wasting evaluations on a sequence of diminishing returns and allows it to restart with a new anchor region. The complete procedure is detailed in Algorithm~\ref{alg:phasewin}.

\begin{algorithm*}[t]
\caption{Phase-Window (PhaseWin) Search Algorithm}
\label{alg:phasewincomplete}
\footnotesize
\DontPrintSemicolon
\SetKwInput{KwIn}{Input}
\SetKwInput{KwOut}{Output}
\SetKwInput{KwHyp}{Hyperparameters}

\KwIn{Region set $\mathcal{V}$, target size $k$, scoring function $\mathcal{F}(\cdot,\mathbf{b}_{\mathrm{target}},c)$}
\KwHyp{Window size $w$, active window phases $m_{\mathrm{active}}$, selection ratio $\alpha_{\mathrm{sel}}$, deletion ratio $\beta_{\mathrm{del}}$, supervision coefficients $\{\theta_t\}$}
\KwOut{Ordered set $S$}

$S \gets \emptyset$,\quad $\mathcal{R} \gets \mathcal{V}$,\quad $t \gets 0$,\quad $\Delta_{\mathrm{prev}} \gets \infty$\;

\While{$|S| < k$ \KwSty{and} $\mathcal{R} \neq \emptyset$}{
    $t \gets t + 1$\;

    Compute $g_r \gets \mathcal{F}(S \cup \{r\})$ for all $r \in \mathcal{R}$\;
    \If{$\max(g_r) \le 0$}{\KwSty{break}}
    
    $\alpha_{\mathrm{best}} \gets \arg\max_{r \in \mathcal{R}} g_r$\;
    $S \gets S \cup \{\alpha_{\mathrm{best}}\}$\;
    $\mathcal{R} \gets \mathcal{R} \setminus \{\alpha_{\mathrm{best}}\}$\;
    $\Delta_t \gets g_{\alpha_{\mathrm{best}}}$\;
    
    Recompute $g_r$ for all $r \in \mathcal{R}$ and let $G_t = \max_{r} g_r$\;
    
    $\tau_{\mathrm{sel}} \gets \alpha_{\mathrm{sel}} \cdot G_t$\;
    $\tau_{\mathrm{del}} \gets \beta_{\mathrm{del}} \cdot G_t$\;
    
    $\mathcal{R} \gets \{ r \in \mathcal{R} ~|~ g_r \ge \tau_{\mathrm{del}} \}$\tcp*{Prune low-gain regions}
    
    $\mathcal{P}_t \gets \{ r \in \mathcal{R} ~|~ g_r \ge \tau_{\mathrm{sel}} \} \cup \text{RandomSample}(\{ r \in \mathcal{R} ~|~ g_r < \tau_{\mathrm{sel}} \})$\;
    
    Sort $\mathcal{P}_t$ in descending order of $g_r$\;

    \If{$t \le m_{\mathrm{active}}$}{
        \tcp{Windowing Mode}
        Initialize window $W$ with top $w$ elements from $\mathcal{P}_t$\;
        
        \While{$|W| > 0$ \KwSty{and} $|S| < k$}{
            $A \gets \pi(W, \mathcal{F}, \tau_{\mathrm{sel}})$\tcp*{Selection policy (e.g., BA)}
            \If{$A = \emptyset$}{\KwSty{break}}
            
            \ForEach{$\alpha \in A$}{
                $\Delta_i \gets \mathcal{F}(S \cup \{\alpha\}) - \mathcal{F}(S)$\;
                
                \If{$\Delta_i < \theta_t \cdot \Delta_{\mathrm{prev}}$}{
                    Compute exit probability $p_{\mathrm{exit}}(\Delta_i, \Delta_{\mathrm{prev}}, \theta_t)$\;
                    \If{$\mathrm{rand}() < p_{\mathrm{exit}}$}{
                        \textbf{goto} end\_phase\;
                    }
                }
                $S \gets S \cup \{\alpha\}$,\quad $\Delta_{\mathrm{prev}} \gets \Delta_i$\;
            }
            
            Remove evaluated items from $W$ and refill from $\mathcal{P}_t$\;
        }
    }
    \Else{
        \tcp{Degenerate Greedy Mode}
        \ForEach{$\alpha \in \mathcal{P}_t$}{
            $\Delta_i \gets \mathcal{F}(S \cup \{\alpha\}) - \mathcal{F}(S)$\;
            
            \If{$\Delta_i < \theta_t \cdot \Delta_{\mathrm{prev}}$}{
                Compute exit probability $p_{\mathrm{exit}}$\;
                \If{$\mathrm{rand}() < p_{\mathrm{exit}}$}{\KwSty{break}}
            }
            $S \gets S \cup \{\alpha\}$,\quad $\Delta_{\mathrm{prev}} \gets \Delta_i$\;
            \If{$|S| \ge k$}{\KwSty{break}}
        }
    }
    
    \tcp*[h]{End current phase}
    \label{alg:phasewin:end_phase}
    end\_phase:\;
}
\Return{$S$}\;
\end{algorithm*}

\section{Evaluation Metrics}
\label{app:metrics}

\paragraph{Faithfulness.} 
To assess how well an attribution map reflects the model's reasoning, we compute the Insertion and Deletion AUC scores, which quantify the change in model output as the most (or least) important superpixels are progressively revealed or removed~\cite{petsiuk2018rise}. We apply these metrics both to classification confidence and to Intersection-over-Union (IoU), thus capturing the attribution's influence on recognition and localization. We further measure the highest confidence score for any predicted box with $\mathrm{IoU} > 0.5$ relative to the target. For failure cases, we evaluate the \textit{Explaining Successful Rate (ESR)}, which measures whether a saliency map can guide the model to a correct detection for initially misclassified or low-confidence predictions. 

\paragraph{Localization Accuracy.} 
We use two established metrics: (i) the \textit{Point Game}, which checks whether the most salient pixel lies inside the ground-truth bounding box, and (ii) the \textit{Energy Point Game}, which extends this by considering the energy concentration of saliency around the target~\cite{zhang2018top}. These metrics are evaluated only on correctly detected samples.

\paragraph{Efficiency.} 
To provide a fair and hardware-agnostic cost measure, we introduce the \textit{Model Evaluation Count (MEC)} as our primary efficiency metric, where one unit corresponds to a single forward pass through the model. The total MEC reflects the algorithm’s runtime cost. Additionally, we define the \textit{Accuracy–Cost Ratio (AC-Ratio)} as the primary performance metric (faithfulness score) multiplied by 1000 and divided by the MEC. This ratio is most meaningful when the faithfulness score meets a predefined quality threshold.

\section{Implementation Details}
\label{app:impl}

In all experiments, the ground-truth bounding box $b_{\mathrm{target}}$ and its category $c$ are provided as references for generating attributions. Each image is segmented into 100 sparse sub-regions using the SLICO superpixel algorithm, which serve as the interpretable units.  

For PhaseWin, we apply a window size of 16 when selecting from 50 sub-regions and 32 when selecting from 100 sub-regions. Results are averaged over five random seeds, with variance consistently below 2\%.  

As the scoring function is not strictly monotonic submodular, the stopping criterion is implemented in a ratio-based form:  
\[
\frac{S_{k-2}}{S_{k-1}} - \frac{S_{k-1}}{S_{k}} \leq \tau.
\]  
We use $\tau=0.025$ for 50 sub-regions and $\tau=0.01$ for 100 sub-regions. This criterion ensures numerical stability across different settings. 
\FloatBarrier
\newpage
\onecolumn
\section{Full Proof}
\label{appendix:proof}
In this section, we will introduce the proof of Theorem~\ref{thm}. Property~\ref{pro} is a classic result of combinatorial optimization. If you are interested in Property 3.1, you can find the relevant proof in ~\cite{edmonds1970submodular,fujishige2005submodular}.

\textbf{Proof of Theorem~\ref{thm}.}

\begin{proof}
If the parameter for AdaptiveThreshold is $(\alpha,\gamma)$ (for select and delete), the parameter for WindowSelection  when $|S|=i$ is $\beta_i$ with $\beta_i$ increasing and $\alpha\beta_i\geq \gamma$.

Let $S_{\text{PhaseWin}}=(v_1,v_2,\ldots,v_k)$, $S_0=\emptyset$ and $S_i=\{v_1,v_2,\ldots,v_i\}$.
Let $\rho_u(V)=\mathcal{F}(V\cup \{u\})-\mathcal{F}(V)$.

For each $1\leq i\geq k$ such that $v_i$ is an element directly added into $S_{\text{PhaseWin}}$ without going into WindowSelection, let $\mathcal{R}_i$ to be the set of choosable elements before $v_i$ is selected, $\mathcal{D}_i$ to be the set of deleted elements after $v_i$ is selected in PartitionCandidates.
Then we have
\begin{align*}
    a_i&\triangleq\rho_{v_i}(S_{i-1}) =\max_{j\in \mathcal{R}_i} \rho_j(S_{i-1}),\\
    \mathcal{D}_i&\triangleq\{j\in \mathcal{R}_i\mid \rho_j(S_{i-1})<\gamma a_i\},\\
    V_i&\triangleq\{j\in \mathcal{R}_i\mid \rho_j(S_{i-1})>\alpha a_i\};\\
    W_i&\triangleq(e_{i,1},e_{i,2},\ldots,e_{i,m_i})\subseteq V_i\text{ is the maximum sequence such that}\\
    e_{i,j}&=v_{i+j}=\text{argmax}\{\rho_e(S_{i+j-1}) \mid e\in S_i\setminus\{e_{i,1},\ldots, e_{i,j-1}\}\} \text{ and }\\
    b_{i,j}&\triangleq \rho_{e_{i,j}}(S_{i+j-1})\geq \beta_{i+j}b_{i,0}\geq \alpha\beta_{i+j}a_i\geq \alpha\beta_{i+j}\max_{j\in \mathcal{R}_{i}} \rho_j(S_{i+j-1}).
\end{align*}
Thus for any $1\leq l\leq k$, we have $$\rho_{v_l}(S_{l-1})\geq \alpha\beta_{l}\max_{j\in \mathcal{R}_l}\rho_j(S_{l-1}).$$

Since $\mathcal{F}$ is increasing and submodular, for any $1\leq l\leq k$ we have
\begin{align*}
    \zopt&\leq \mathcal{F}(S_{l-1})+\sum_{j\in (T\setminus S_{l-1})\cap \mathcal{R}_l} \rho_j(S_{l-1})+\sum_{m=1}^{l-1}\sum_{x\in (T\setminus S_{l-1})\cap D_m}\rho_j(S_{l-1})\\
    &\leq \mathcal{F}(S_{l-1})+\dfrac{k}{\alpha\beta_l}\rho_{v_l}(S_{l-1})+k\gamma\sum_{m=1}^{l-1}\rho_{v_m}(S_{m-1})\\
    &=\dfrac{k}{\alpha\beta_l}\mathcal{F}(S_l)-(\dfrac{k}{\alpha\beta_l}-1-k\gamma)\mathcal{F}(S_{l-1}).
\end{align*}
Let $\lambda_i=\dfrac{\alpha\beta_i}{k}$ and $\mu_i=\dfrac{\alpha\beta_i}{k}(\dfrac{k}{\alpha\beta_i}-1-k\gamma)$, then we have
$$ \zpw \geq \zopt\cdot(\lambda_k+\mu_k\lambda_{k-1}+\mu_k\mu_{k-1}\lambda_{k-2}+\cdots +\mu_k\mu_{k-1}\ldots\mu_2\lambda_1).$$
In particular, if $\beta_i=\beta$ for $i=1,2,\ldots,k$, then
$$\zpw\geq \dfrac{\lambda_1(1-\mu_1^k)}{1-\mu_1}\zopt.$$
If $k,\alpha,\beta$ is big enough and $\gamma$ is small enough, then
$$\zpw\geq (1-\dfrac{1}{e}-o(1))\zopt.$$
\end{proof}

\section{Submodularity and Supermodularity}
\label{appendix:submodularity}

\subsection{Definitions}
Let $V$ denote a finite ground set of candidate regions and $F: 2^V \to \mathbb{R}$ be a set function that scores any subset $S \subseteq V$.

\begin{definition}[Submodularity]
A set function $F$ is \emph{submodular} if it satisfies the \emph{diminishing returns property}: for all $A \subseteq B \subseteq V$ and $\alpha \in V \setminus B$,
\[
F(A \cup \{\alpha\}) - F(A) \;\;\ge\;\; F(B \cup \{\alpha\}) - F(B).
\]
That is, the marginal gain of adding an element decreases as the context grows.
\end{definition}

\begin{definition}[Supermodularity]
A set function $F$ is \emph{supermodular} if it satisfies the \emph{increasing returns property}: for all $A \subseteq B \subseteq V$ and $\alpha \in V \setminus B$,
\[
F(A \cup \{\alpha\}) - F(A) \;\;\le\;\; F(B \cup \{\alpha\}) - F(B).
\]
That is, the marginal gain of adding an element increases as the context grows.
\end{definition}

\subsection{Optimization Significance}
Submodularity generalizes the notion of convexity to discrete set functions. Maximizing a monotone submodular function under a cardinality constraint admits a simple greedy algorithm with a $(1 - 1/e)$-approximation guarantee, which is provably optimal under polynomial-time complexity assumptions. In contrast, supermodular functions exhibit cooperative effects, and their maximization is generally intractable, while their minimization is often easier.

\subsection{AUC Curve Properties}
In attribution evaluation, we consider the insertion process: progressively adding sub-regions $s_1, s_2, \dots$ into the image. Let
\[
\mathrm{AUC}(k) = \frac{1}{|V|} \sum_{j=1}^k F(\{s_1,\dots,s_j\})
\]
denote the cumulative insertion-AUC score up to step $k$.

\begin{theorem}
If $F$ is submodular, then the insertion AUC curve $\mathrm{AUC}(k)$ is concave in $k$.  
If $F$ is supermodular, then $\mathrm{AUC}(k)$ is convex in $k$.
\end{theorem}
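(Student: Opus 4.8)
The plan is to reduce the (discrete) concavity/convexity of $\mathrm{AUC}(k)$ to the monotone behavior of the marginal gains along the insertion path, and then to read the sign of the second difference directly off the defining modularity inequality. Write $S_0=\emptyset$, $S_j=\{s_1,\dots,s_j\}$, and $\delta_j = F(S_j)-F(S_{j-1}) = \rho_{s_j}(S_{j-1})$ for the marginal contribution at step $j$, using the notation $\rho_u(A)=F(A\cup\{u\})-F(A)$ from the proof of Theorem~\ref{thm}. Since $|V|^{-1}$ is a positive constant it does not affect concavity, so it suffices to study the forward increments of the insertion response and their consecutive differences $\delta_{j+1}-\delta_j$; discrete concavity is exactly $\delta_{j+1}\le\delta_j$ for every $j$, and convexity is the reverse inequality.

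First I would rewrite the relevant second difference so that a single element moves across nested contexts. The difficulty is that $\delta_j$ and $\delta_{j+1}$ involve different inserted elements $s_j\neq s_{j+1}$, so the raw diminishing-returns inequality — which compares the \emph{same} element against $A\subseteq B$ — does not apply verbatim. The key step is therefore a two-inequality chain: (i) the selection procedure inserts elements in non-increasing marginal order, giving $\rho_{s_j}(S_{j-1})\ge \rho_{s_{j+1}}(S_{j-1})$ at the common context $S_{j-1}$; and (ii) submodularity applied to the fixed element $s_{j+1}$ over the nested sets $S_{j-1}\subseteq S_j$ gives $\rho_{s_{j+1}}(S_{j-1})\ge \rho_{s_{j+1}}(S_j)=\delta_{j+1}$. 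Chaining (i) and (ii) yields $\delta_j\ge\delta_{j+1}$, i.e.\ concavity of the response profile and hence of $\mathrm{AUC}(k)$.

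For the supermodular case I would invoke $F\ \text{supermodular}\iff -F\ \text{submodular}$, so that $\mathrm{AUC}(k)$ for $F$ equals $-1$ times the corresponding quantity for the submodular function $-F$; applying the concave result to $-F$ (with the matching insertion order, now by increasing marginal gain) flips the sign and produces convexity. Equivalently, one reruns the two-step chain with the increasing-returns inequality reversing the direction of step (ii).

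The main obstacle — and the point I would be most careful about — is precisely step (i): submodularity alone, over an arbitrary insertion order, does \emph{not} force the increments to be monotone (a two-element instance with a dominant singleton already yields a locally convex response under a non-greedy order). The argument therefore hinges on the insertion order being the greedy/sorted order that the attribution pipeline actually produces, and on correctly pairing the greedy-maximality inequality with the nested-context modularity inequality. A secondary caveat I would flag is that $\mathcal{F}$ is only approximately monotone submodular in practice, so the concavity statement should be understood up to the same $o(1)$/curvature slack appearing in Theorem~\ref{thm}, rather than holding exactly.
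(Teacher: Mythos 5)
Your proof is correct, and at its core it follows the same reduction as the paper's own sketch: concavity of the insertion curve is equivalent to the per-step increments $\delta_j$ being non-increasing, and this is read off the diminishing-returns property. But you supply exactly the step the paper skips. The paper's proof is one line — diminishing returns make the marginal gain non-increasing in $|S|$, hence the discrete derivative decreases — and, as you observe, this is a non sequitur for consecutive increments, because $\delta_j$ and $\delta_{j+1}$ involve \emph{different} elements, while submodularity only compares a \emph{fixed} element across nested contexts. Your two-inequality chain, greedy maximality $\rho_{s_j}(S_{j-1}) \ge \rho_{s_{j+1}}(S_{j-1})$ at the common context followed by submodularity $\rho_{s_{j+1}}(S_{j-1}) \ge \rho_{s_{j+1}}(S_j) = \delta_{j+1}$, is precisely the missing lemma, and your dominant-singleton example shows the order hypothesis is not cosmetic: under a non-greedy insertion order the statement is simply false. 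In this respect your write-up is a strict improvement on the paper's sketch, which tacitly assumes the greedy order without ever invoking it.

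One caveat cuts the other way, in the supermodular half. Your reduction to $-F$ needs the insertion order to be greedy \emph{for $-F$}, i.e., increasing marginal gain of $F$, which is the reverse of the decreasing-importance order the attribution pipeline actually produces. Under the usual greedy-max order, supermodularity does \emph{not} yield convexity: take monotone supermodular $F$ on $\{a,b\}$ with $F(\emptyset)=0$, $F(\{a\})=10$, $F(\{b\})=0$, $F(\{a,b\})=11$; greedy inserts $a$ then $b$ with increments $10$ then $1$, a concave profile. The paper's sketch glosses this symmetrically ("marginal gains increase with $k$"), and you at least flag that the order changes, but it deserves an explicit statement that the two halves of the theorem hold under \emph{opposite} insertion orders. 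A final definitional point worth recording: as literally defined, $\mathrm{AUC}(k)$ is a cumulative sum whose second difference equals $\delta_{k+1}/|V| \ge 0$ whenever $F$ is monotone, so under the literal definition the curve is always convex; both you and the paper implicitly (and sensibly) read "the AUC curve" as the insertion response $k \mapsto F(S_k)$, whose discrete derivative is the marginal gain, and your argument is the correct one for that reading.
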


\begin{proof}[Sketch]
For submodular $F$, diminishing returns imply that the marginal gain $F(S \cup \{s\}) - F(S)$ is non-increasing in $|S|$. Thus, the discrete derivative of $\mathrm{AUC}(k)$ decreases with $k$, yielding concavity. Conversely, if $F$ is supermodular, marginal gains increase with $k$, so $\mathrm{AUC}(k)$ is convex.
\end{proof}

\subsection{Implications for Deep Learning}
Deep neural networks do not strictly satisfy either submodularity or supermodularity. Instead, their attribution behavior reflects a hybrid of both: some features exhibit redundancy (submodular-like), while others rely on synergy (supermodular-like). From the perspective of distributed computation, submodularity and supermodularity describe not universal properties of the model but rather the modes of feature aggregation. Submodularity corresponds to robust, redundant feature usage, while supermodularity corresponds to cooperative, highly interactive feature combinations. These patterns shed light on how models internally organize basic feature units, rather than providing exact guarantees.

The two models we selected are, respectively, dominated by submodularity and supermodularity. Below, we show the Insertion AUC curves (Figure~\ref{sub}) for Grounding DINO and Florence-2 on the same sample. Their unevenness indicates that Grounding DINO exhibits submodularity most of the time, while Florence-2 is almost universally submodular. Our algorithm achieved acceleration on both models, and the difference in performance is precisely due to the difference between submodularity and supermodularity.

\begin{figure}[ht]
    \centering
    \includegraphics[width=0.9\linewidth]{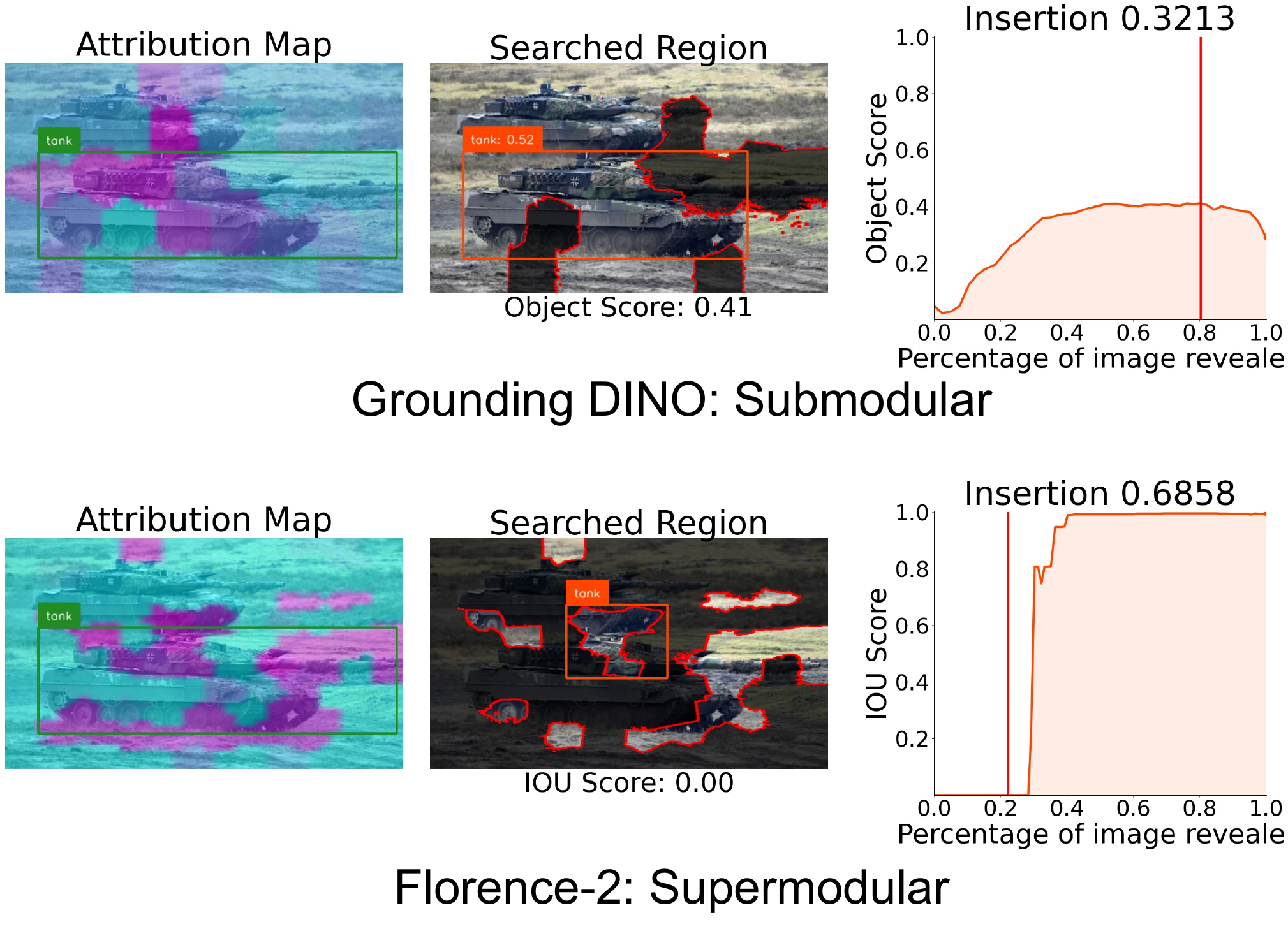}
    \caption{Insertion AUC under Greedy (VPS). Grounding DINO is almost concave with only a few exceptions, while Florence-2 is completely convex.}
    \label{sub}
\end{figure}
\FloatBarrier
\newpage

\section{Additional Visualization Results}
\label{appvis}

In this section, we provide additional qualitative results to further illustrate the visual differences between the original Visual Precision Search (VPS) and our PhaseWin. Each figure presents one representative example from different tasks and datasets. For each case, we show side-by-side attribution maps highlighting how both methods localize critical regions that drive the prediction of object-level foundation models. These examples complement the quantitative results in Section~\ref{exp}, demonstrating that PhaseWin preserves interpretability quality while achieving substantial efficiency gains.

\begin{figure}[ht]
    \centering
    \includegraphics[width=0.9\linewidth]{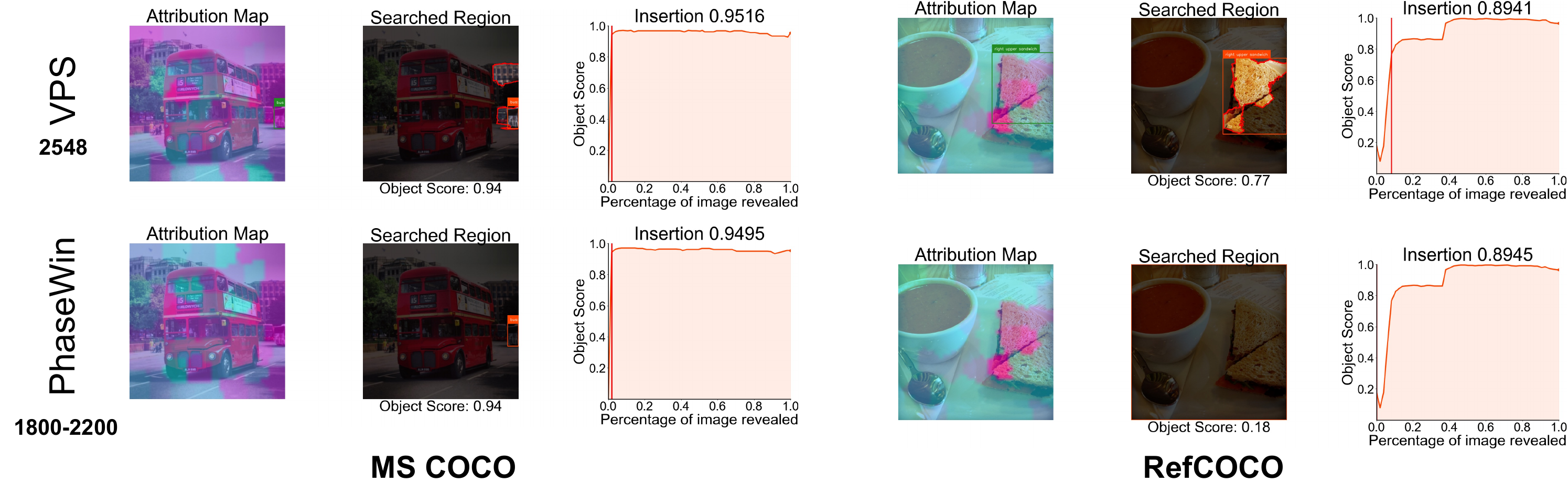}
    \caption{Comparison between VPS and PhaseWin on Florence-2 for correctly detected samples in MS COCO and RefCOCO. Both methods highlight semantically relevant regions, while PhaseWin produces equally faithful maps with far fewer evaluations.}
    \label{fig:flovis}
\end{figure}

\begin{figure}[ht]
    \centering
    \includegraphics[width=0.9\linewidth]{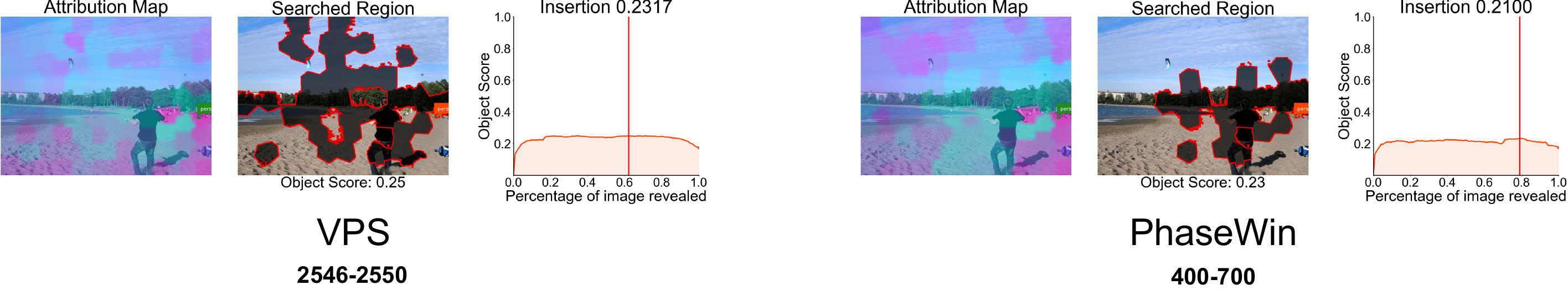}
    \caption{Visualization on Grounding DINO (MS COCO misclassification). VPS and PhaseWin consistently attribute the incorrect prediction to the same misleading region, confirming that PhaseWin maintains interpretive fidelity even in failure cases.}
    \label{fig:grmiscoco}
\end{figure}

\begin{figure}[ht]
    \centering
    \includegraphics[width=0.9\linewidth]{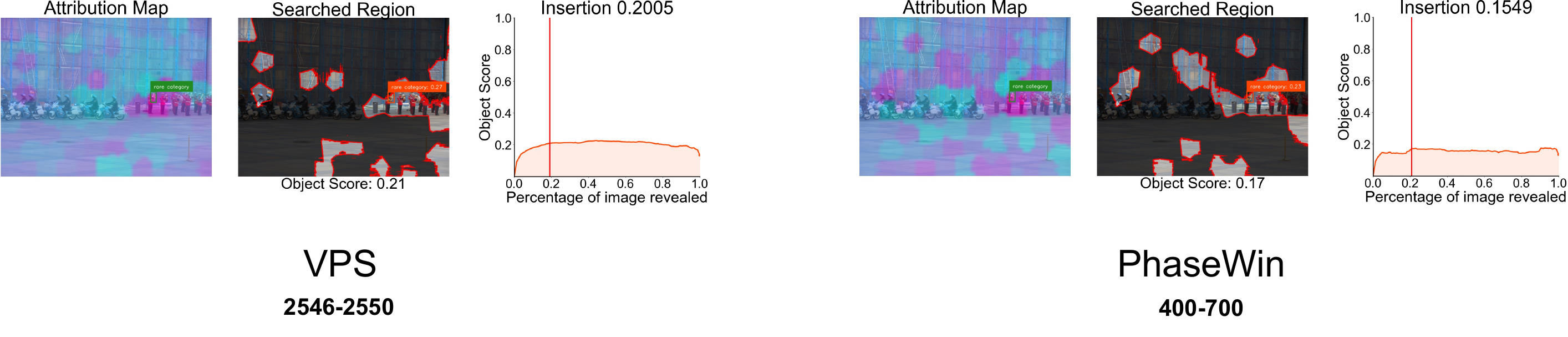}
    \caption{Visualization on Grounding DINO (LVIS misclassification). Both methods reveal the background context responsible for confusion, with PhaseWin matching the fine-grained localization quality of VPS at lower computational cost.}
    \label{fig:grmislvis}
\end{figure}

\begin{figure}[ht]
    \centering
    \includegraphics[width=0.9\linewidth]{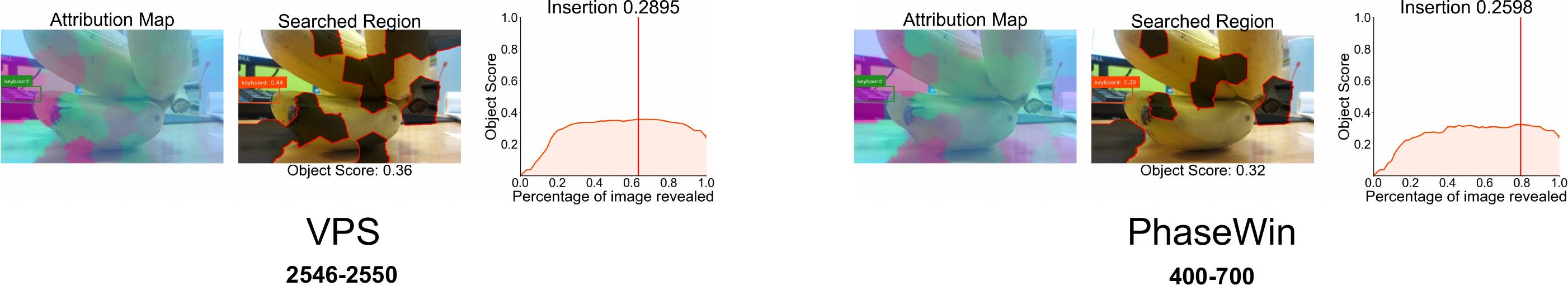}
    \caption{Visualization on Grounding DINO (MS COCO missed detection). VPS and PhaseWin identify the overlooked object region. PhaseWin effectively reproduces the trajectory of evidence accumulation with a fraction of the overhead.}
    \label{fig:grcocodet}
\end{figure}

\begin{figure}[ht]
    \centering
    \includegraphics[width=0.9\linewidth]{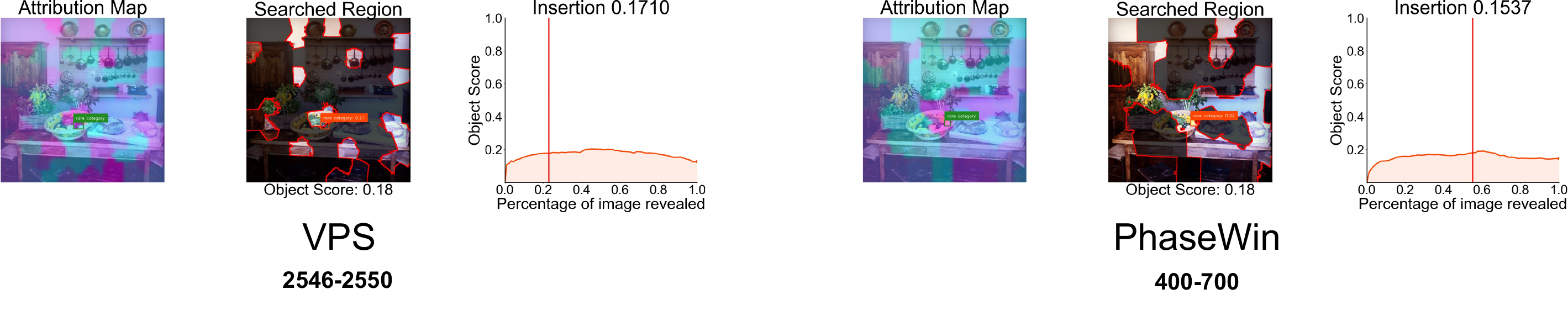}
    \caption{Visualization on Grounding DINO (LVIS missed detection). PhaseWin successfully recovers the same key evidence regions highlighted by VPS, showing its robustness on challenging zero-shot categories.}
    \label{fig:grlvvis}
\end{figure}

\begin{figure}[ht]
    \centering
    \includegraphics[width=0.9\linewidth]{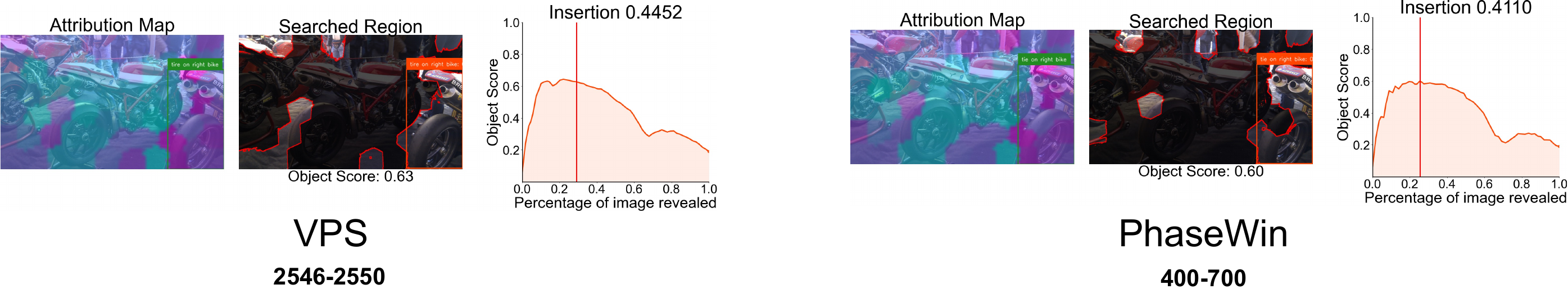}
    \caption{Visualization on Grounding DINO (RefCOCO grounding mistake). Both methods attribute the grounding failure to distractor regions, while PhaseWin provides nearly identical explanations with significantly fewer model evaluations.}
    \label{fig:grrfvis}
\end{figure}

\end{document}